\DeclareMathOperator*{\argmax}{arg\,max}
\DeclareMathOperator*{\EX}{\mathbb{E}}
\theoremstyle{plain}
\newtheorem{lemma}{Lemma}
\theoremstyle{definition}
\newtheorem{definition}{Definition}
\theoremstyle{remark}
\definecolor{mydarkblue}{rgb}{0,0.08,0.45}
\title{Distributed Influence-Augmented Local Simulators for Parallel MARL in Large Networked Systems}
\author{
  Miguel Suau\\
  Delft University of Technology \\
  m.suaudecastro@tudelft.nl
  \And
  Jinke He \\
  Delft University of Technology \\
  j.he-4@tudelft.nl
  \And
  Mustafa Mert \c{C}elikok\\
  Aalto University \\
  mustafa.celikok@aalto.fi
  \And
  Matthijs T. J. Spaan \\
  Delft University of Technology \\
  m.t.j.spaan@tudelft.nl
  \And
  Frans A. Oliehoek \\
  Delft University of Technology \\
  f.a.oliehoek@tudelft.nl
}
\begin{document}

\maketitle
\begin{abstract}

Due to its high sample complexity, simulation is, as of today, critical for the successful application of reinforcement learning. Many real-world problems, however, exhibit overly complex dynamics, making their full-scale simulation computationally slow. In this paper, we show how to factorize large networked systems of many agents into multiple local regions such that we can build separate simulators that run independently and in parallel. To monitor the influence that the different local regions exert on one another, each of these simulators is equipped with a learned model that is periodically trained on real trajectories.
Our empirical results reveal that distributing the simulation among different processes not only makes it possible to train large multi-agent systems in just a few hours but also helps mitigate the negative effects of simultaneous learning.\footnote{Source code is available at \url{https://github.com/INFLUENCEorg/DIALS}.}
\end{abstract}

\section{Introduction}



Imagine we have to train a team of agents to control the traffic lights of a very large city, so large that we simply cannot control all traffic lights using a single policy. The first step would be to split the problem into multiple sub-regions. A natural division would be to assign one traffic light to each agent. Then, since the agents act locally, we would limit their observations to contain only local information. This partial observability could affect their optimal policies but would also make each individual decision-making problem more manageable \citep{McCallum95PhD, Dearden97AIJ}. Moreover, we may also want to reward agents only for what occurs in their local neighborhood such that we reduce the variance of the returns \citep{spooner2021factored} and facilitate credit assignment \citep{castellini2020difference}. Finally, we could train all agents together on a big traffic simulator that reproduces the global dynamics. However, if the city is truly large, it could take weeks or even months to optimize their policies. That is assuming training actually converges. 

One may argue that, since the agents' observations and rewards are local, we could as well train them on separate simulators that model only the local transition dynamics (i.e. cars moving within each of the sub-regions; \citealt{VanDerPol16NIPSWS}). This approach might work if the agents' local transitions are isolated from the rest of the system \citep{Becker03AAMAS}, but would probably break when the local regions are coupled. This is because the local simulators would fail to account for the fact that the agents' local regions belong to a larger system and depend on one another. 
A solution is to model the influence the global system exerts on each local region. Fortunately, this does not necessarily imply modelling the entire system, or else we would just use the global simulator. In many scenarios, such as in the traffic problem, even though the local regions may be affected by many external variables (e.g.traffic densities in other parts of the city), they are only directly influenced by a small subset of them (e.g., road segments that connect the intersections with the rest of the city). This subset of variables is known as the influence sources. The theoretical framework of Influence-Based Abstraction \citep{oliehoek2021sufficient} shows that by monitoring the posterior distribution of the influence sources given the action local state history (ALSH), one can simulate realistic trajectories that match those produced by the global simulator. The resulting simulator, known as the influence-augmented local simulator (IALS), has been proven effective in single agent scenarios when combined with planning \citep{he2020influence} and reinforcement learning (RL) algorithms \citep{suau2022influence}.

In this paper, we extend the IBA framework to multi-agent domains. We show how to factorize large networked systems, such as the previous traffic example, into multiple sub-regions so that we can replace the GS by a distributed network of IALSs that can run independently and in parallel. There is one important caveat to this. The IBA framework assumes only a single agent is learning at a time. This assumption is needed to make the influence distributions stationary. This implies that in our case since we want the agents to learn simultaneously, previously computed influence distributions
would no longer be valid after the agents update their policies. The naive solution would be to recompute new influence distributions every time any agent updates its policy. However, we argue that this is not only impractical, since recomputing the distributions is not without costs, but also undesirable. The theoretical results in Section \ref{sec:parallelization} demonstrate that multiple (similar) joint policies may induce the same influence distributions and that even when they vary a little, they can still elicit the same optimal policies. Further, our insights in Section \ref{sec:mitigating} hint that what seems to be a problem at first, may in fact be an advantage since in many situations, maintaining the previous influence distributions implies that the local transitions, although biased, remain stationary. 
\paragraph{Contributions} The main contributions of this paper are: (1) adapting IBA to multi-agent reinforcement learning (MARL),\footnote{Although the original IBA formulation \citep{oliehoek2012influence} is already framed as multi-agent, it assumes agents learn one at a time while the other agents' policies are fixed.} and demonstrating that simultaneous learning is possible without incurring major computational costs,
(2) showing that by distributing the simulation among different processes, we can parallelize training and scale up to systems with many agents, (3) revealing that the non-stationarity issues inherent to MARL are partly mitigated as a result of this training scheme.
\section{Related Work}
A few prior works have investigated the computational benefits of factorizing large systems into independent local regions \citep{Nair05AAAI,Varakantham07AAMAS, Kumar11IJCAI, Witwicki11AAMAS}. Unfortunately, since local regions are often coupled to one another, such factorizations are not always appropriate. Nonetheless, in many cases, the interactions between regions occur through a limited number of variables. Using this property, the theoretical work by \citet{oliehoek2021sufficient} on influence-based abstraction (IBA) describes how to build influence-augmented local simulators (IALS) of local-POMDPs, which model only the variables in the environment that are directly relevant to the agent while monitoring the response of the rest of the system with the influence predictor. The problem is that the exact computation of the conditional influence distribution is intractable, and we can only try to estimate it from data. \citet{Congeduti21AAMAS} provide theoretical bounds on the value loss when planning with approximate influence predictors. The work by \citet{he2020influence} has empirically demonstrated the advantage of this approach to improve the efficiency of online planning in two discrete toy problems.  \citet{suau2022influence} scale the method to high-dimensional problems by integrating the IBA framework with single-agent RL showing that the IALS can train policies much faster than the GS. In this paper, we extend the IBA solution to MARL and explain how to build a network of independent IALS such that we can train agents in parallel.

One of the consequences of training agents on independent simulators is that the non-stationarity issues arising from having the agents learn simultaneously are partly mitigated. There is a sizeable body of literature that concentrates on this issue \citep{hernandez2017survey}, we include a review of these works in Appendix \ref{ap:related_work} for completeness. However, we note that the main purpose of this paper is to scale MARL up to systems with many agents. Hence, we are not concerned here with  comparing our method with those that exclusively target non-stationarity, especially given that, for scalability reasons, these cannot be applied to the high-dimensional problems we consider here.

\section{Preliminaries}\label{sec:background}


The type of problems we describe in the introduction can be formulated as factored partially observable stochastic games \citep{Hansen04AAAI}, which are defined as follows.
\begin{definition}[fPOSG]
A factored partially observable stochastic game (fPOSG) is a tuple $\langle N, S, A, T, \{R_i\}, \Omega, \{O_i\}\rangle$ where $N = \{1, ..., n\}$ is the set of $n$ agents, $S$ is the set of $j$ state variables $S = \{S^1, ..., S^k\}$, such that every state $s^t \in \times_{j=1}^k S^j$ is a $k$-dimensional vector $s^t = \langle s^{1,t}, ..., s^{k,t} \rangle$, $A = \times_{i\in N} A_i$ is the set of joint actions $a^t = \langle a^t_1, ..., a^t_n \rangle$, with $A_i$ being the set of actions for agent $i$, $T$ is the transition function, with $T(s^{t+1}|s^t,a^t)$, $R_i(s^t,a_i^t)$ is the immediate reward for agent $i$, $\Omega = \times_{i \in N} \Omega_i$ is the set of joint observations $o^t_i =\langle o^t_1, ..., o^t_n \rangle$, with $\Omega_i$ being the set of observations for agent $i$, and $O_i$ is the observation function for agent $i$, $O_i(o_i^t|s^t)$.
\end{definition}


Solving the fPOSG implies finding the policy $\pi_i$ for each agent $i$ that maximizes the expected return $G^t$; as defined in \citet{SuttonBarto98}. However, agents receive only partial observations $o_i$ of the true state $s$, which are not necessarily Markovian. Therefore, optimal policies are in general history-dependent in a POSG. Hence, we define the agents' policies $\pi_i(a_i^t|h_i^t)$ as mappings from action-observation histories (AOH), $h_i^t = \langle o_i^1, a_i^1, ..., a_{i}^{t-1}, o_i^t\rangle$, to probability distributions over actions, such that agent $i$'s optimal policy $\pi^*_i$ is the one that for every AOH $h_i^t$ selects the action with the highest $Q$-value $Q^{\pi_i^*}(h_i^t, a_i^t) =  \EX \left[G^t \mid h_i^t, a_i^t, \pi_i^* \right]$.

Given the structural assumptions we made in the introduction about the agents' only being able to observe and be rewarded for what occurs in their local neighborhood, we can narrow down the problem formulation and work with a specific class of fPOSGs called local-form fPOSGs \citep{oliehoek2021sufficient}, which better encompass the problems we consider here.
\begin{definition}[Local-form fPOSG]
\label{def:local-FPOSG}
A Local-form fPOSG is a fPOSG where $O_i$ and $R_i$ depend only on a subset of $m$ state variables $X_i = \{X_i^1, ..., X_i^m\} \subseteq S$, with $m \leq k$ (number of state variables), and agent $i$'s local states $x_i \in \times_{j=1}^m X_i^j$ being vectors $x^t_i = \langle x_i^{1,t}, ..., x_i^{k,t}\rangle$, such that $O_i(o_i^t|s^t) = \dot{O}_i(o_i^t|x_i^t)$ and $R_i(s^t, a_i^t) = \dot{R}_i(x_i^t, a_i^t)$, where $\dot{O}_i$ and $\dot{R}_i$ are the local observation and reward functions for agent $i$.
\end{definition}

\subsection{Influence-Based Abstraction}

We now describe the IBA framework \citep{oliehoek2021sufficient} which intends to simplify the local-form fPOSG formulation by exploiting its structural properties. The framework assumes that there is a single agent $i$ learning at a time while all other agents' policies $\pi_{-i}$ are fixed. Hence, from the perspective of agent $i$, the problem reduces to a POMDP \citep{Kaelbling96JAIR} where states are pairs $\langle s^t, h_{-i}^t \rangle$, with $h_{-i}^t$ being the AOHs of all agents but agent $i$ \citep{Nair03IJCAI}.
Finding a locally optimal solution to the Local-form fPOSG can be approached by ‘alternating maximization’ or ‘coordinate ascent’. That is, sequentially iterating over all agents,  possibly multiple times, and solving their respective POMDPs \citep{Nair03IJCAI, Oliehoek16Book}.

Looking at the definition of Local-form fPOSG, one can argue that, when solving for agent $i$, sampling actions from the policies of all the other agents $\pi_{-i}(a_{-i}^t| h_{-i}^t)$ and simulating the transitions $T(s^{t+1}|s^t, a^t)$ of the full set of state variables
is unnecessary, and while doing so is possible in small problems, it might become computationally intractable in large domains with many agents. Instead, we can define a new transition function $\bar{T}_i$ that models only agent $i$'s local state variables $x_i$, $\bar{T}_i(x_i^{t+1}|x_i^t, a_i^t)$.
The problem is that $x_i^{t+1}$ may still depend on the other agents' actions $a_{-i}^t$ and the  non-local state variables $S \setminus X_i$, which means that $\bar{T}_i$ is not well defined.
Fortunately, in many problems, only a fraction of the non-local state variables will \emph{directly influence} agent $i$'s local region.

The diagram on the left of Figure \ref{fig:pomdp+ips} is a Dynamic Bayesian Network (DBN) \citep{pearl88, boutilier1999decision} describing a particular instance of the transition dynamics for a generic agent $i$ in a local-form fPOSG. Agent $i$'s local region, 
corresponds to the variables that lie within the red box, $x_i \in X_i = \{ X_i^1,X_i^2 \}$. The diagram also shows the non-local variables, known as influence sources $u_i \in U_i \subseteq S \setminus X_i$, that influence the local region directly. The three dots on the top indicate that there can be, potentially many, other non-local variables in $S$ affecting the local variables $X_i$. These are denoted by $y_i \in Y_i \subseteq S \setminus X_i \cup U_i$. The diagram also shows that agent $j$ can affect agent $i$'s local region through its actions $a_j \in A_{j}$. However, both $Y_i$ and $A_{j}$ can only influence $X_i$ via $U_i$. Hence, given $u_i^t$, $x_i^{t+1}$ is conditionally independent of $y_i^t$ and $a_j^t$, therefore $P(x_i^{t+1}|x_i^t, u_i^t, y_i^t, a_j^t) = P(x_i^{t+1}|x_i^t, u_i^t)$. 

The above implies that by inferring the value of the influence source $u_i^t$, we can monitor the influence of the other agents and the non-local state variables and thus compute the local state transitions. This can be done by keeping track of the action-local-state history (ALSH) $l_i^t = \langle x_i^1, a_i^1 ..., a_i^{t-1}, x_i^t\rangle$.

\begin{definition}[IALM] An influence-augmented local Model (IALM) for agent $i$ is a tuple $\langle X_i, U_i, A_i, \dot{T}_i, \dot{R}_i, \Omega_i, \dot{O}_i, I_i\rangle$, with local states $x_i \in \times_{j \in |X_i|} X_i^j$, influence sources $u_i \in \times_{j \in |U_i|} U_i^j$, local transition function $\dot{T}_i(x_i^{t+1}|x_i^t, u_i^t, a_i^t)$, local observation function $\dot{O}_i(o_i^{t+1}|x_i^{t+1})$, local reward function $\dot{R}_i(x_i^t, a_i^t)$,  and influence distribution $I_i(u_i^t|l_i^t)$.
\label{def:IALM}
\end{definition}
Using the IALM we can compute agent $i$'s local transitions as
\begin{equation}
\small
\begin{split}
    P(x_i^{t+1}| l_i^t, a_i^t) = 
    \sum_{u^t}  \dot{T}_i(x_i^{t+1}|x_i^t, u_i^t, a_i^t)I_i(u_i^t|l_i^t).
    \label{eq:IALM}
\end{split}
\end{equation}
Note that, as opposed to the Local-form fPOSG, the transition function $\dot{T}_i$ in the IALM is defined purely in terms of the local state variables and the influence sources. Moreover, since $u_t$ \emph{d-separates} \citep{Bishop06book} $x_t$ from $y_t$, we only need to maintain a belief over $u_i^t$, $I_i(u_i^t|l_i^t)$, rather than over the full set of of state variables $s^t$ and other agents' histories $h_{-i}^t$, $P(s^t, h_{-i}^t| l_i^t)$. All in all, this translates into a much more compact, yet \emph{exact} representation of the problem \citep{oliehoek2021sufficient}, which should be computationally much lighter than the original Local-form fPOSG.
\begin{figure*}[t]
\vspace{-10pt}
\centering
\includegraphics[width=0.9\textwidth]{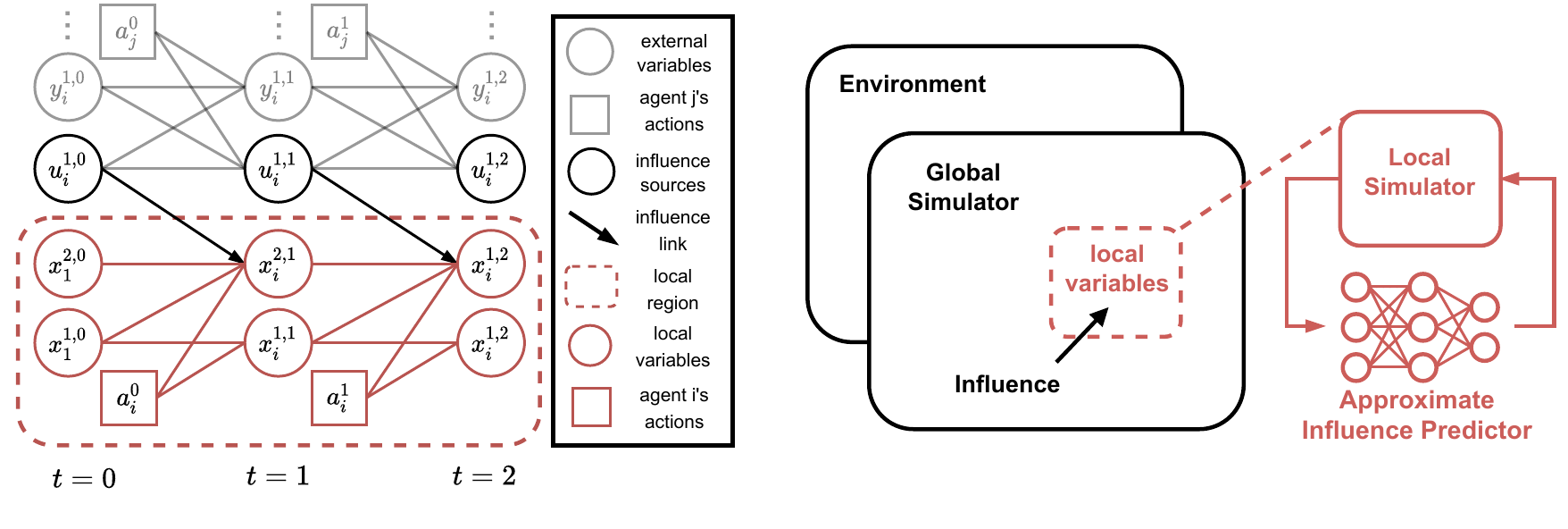}
\vspace{-10pt}
\caption{\textbf{Left:} A Dynamic Bayesian Network showing agent $i$'s transition dynamics in a local-form fPOSG prototype. \textbf{Right:} A conceptual diagram of the IALS.}
\label{fig:pomdp+ips}
\vspace{-10pt}
\end{figure*}
\subsection{Influence-Augmented Local Simulators}\label{sec:IALS}
Here we briefly describe how the IALM formulation can be used in practice to build IALSs \citep{suau2022influence}, which consist of a \emph{local simulator} and an \emph{approximate influence predictor}.


\paragraph{Local simulator (LS):}
The LS is an abstracted version of the environment that only models a small portion of it. As opposed to a global simulator (GS), which should closely reproduce the dynamics of every state variable, the LS focuses on characterizing the transitions of those variables $X_i$ that agent $i$ directly interacts with, $\dot{T}_i(x_i^{t+1}|x_i^t, u_i^t, a_i^t)$. 

\paragraph{Approximate influence predictor (AIP): } The AIP monitors the interactions between agent $i$'s local region $X_i$, the external variables $Y_i$, and the other agents' actions $A_{-i}$, by estimating $I_i(u_i^t|l_i^t)$. 
Since, due to combinatorial explosion, computing the exact probability $I_i(u_i^t|l_i^t)$ is generally intractable \citep{oliehoek2021sufficient}, a neural network is used instead to approximate the influence distribution. Thus, we write $\hat{I}_{\theta_i}$ to denote agent $i$'s AIP,
where $\theta_i$ are the network parameters.
The AIP
$\hat{I}_{\theta_i}$ is trained on a dataset $D_i$ of $N$ samples of the form $(l^t_i, u^t_i)$ collected from the GS.
Since the role of the AIP is to estimate the conditional probability of the influence sources $u_i^t$ given the past ALSH,
we can formulate the task as a classification problem and optimize the network using the expected cross-entropy loss 
\citep{Bishop06book}.


\section{Distributed Influence-Augmented Local Simulators}\label{sec:DIALS}
As mentioned in the previous section, local-form fPOSGs are solved iteratively in the IBA framework. This means that only a single agent can update its policy at a time. Here, we relax this assumption and discuss the advantages and disadvantages of simultaneous learning. Proofs for all the theoretical results in this section can be found in Appendix \ref{ap:proofs}.

\begin{wrapfigure}{r}{0.5\textwidth}
\vspace{-40pt}
  \begin{center}
    \includegraphics[width=0.5\textwidth]{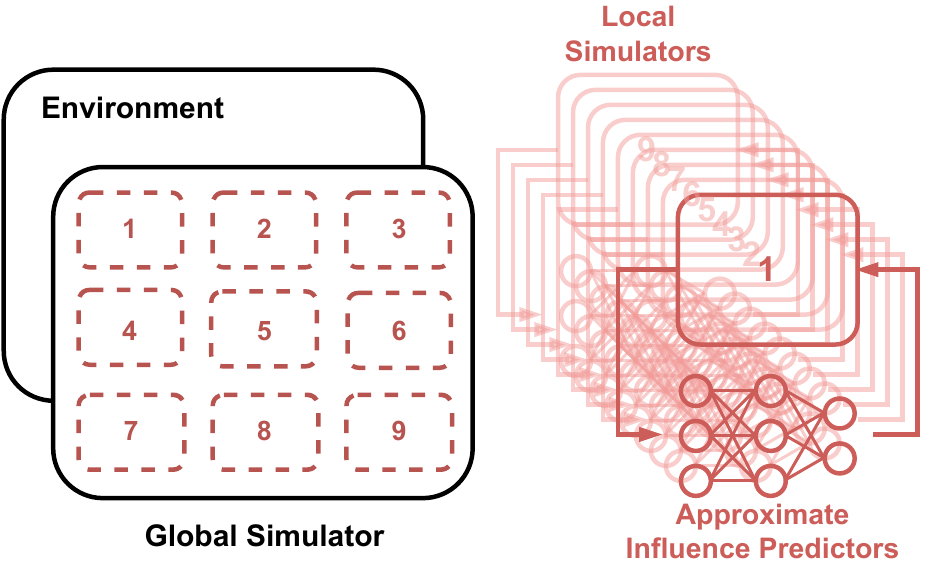}
  \end{center}
  \vspace{-5pt}
  \caption{A conceptual diagram of the DIALS}
  \vspace{-10pt}
\end{wrapfigure}
\subsection{Enabling Parallelization}\label{sec:parallelization}
The main reason to disallow simultaneous learning is that changes in the other agents' policies can affect agent $i$'s influence distribution $I_i(u_i^t|l_i^t)$, which may become non-stationary. This renders previously computed influences useless because they no longer capture the true response of the global system. 

This restriction, however, prevents IBA from unlocking its full potential. The fact that each agent's IALS is independent of the others means that the computations can be distributed among different processes that can run in parallel. Hence, putting aside the non-stationarity issue, and assuming no overhead costs in spawning an increasing number of processes, the total runtime of the method would stay constant if the dimensionality of the global system grew, either because the number of non-local variables or the number of agents increased. This is in contrast to having agents learn simultaneously in the same GS, in which case larger environments imply longer runtimes. Moreover, since each IALS simulates only a portion of the environment the total amount of memory space needed would be split among the different processors. Hence, we could run the simulation on multiple machines with small memory rather than one big machine with very large memory.


In principle, one could prevent the AIPs from becoming stale by simply updating all $\{\hat{I}_{\theta_i}(u_i^t|l_i^t)\}_{i\in N}$ every time any of the other agents changes its policy. 
However, this creates a difficult moving target problem and makes the whole method very inefficient since, especially in deep RL, policies are updated very frequently.
Fortunately, as we argue in the following, in many cases, paying the extra cost of retraining the AIPs is neither necessary nor desirable.

\subsubsection{Multiple joint policies may induce the same influence distribution}
In the following, we show that multiple joint policies may often map onto the same influence distribution $I_i(u_i^t|l_i^t) \in \Psi_i$ for agent $i \in N$.
\begin{restatable}{lemma}{onetoone}
Let $\Pi = \times_{i \in N} \Pi_i$ be the product space of joint policies with $\Pi_i$ being the set of policies for agent $i$. Moreover, let $\Psi = \times_{i\in N} \Psi_i$ be the product space of joint influences, with $\Psi_i$ being the set of influence distributions for agent $i$. Every joint policy $\pi \in  \Pi$ induces exactly one influence distribution $I_i(u_i^t|l_i^t) \in \Psi_i$ for every agent $i \in N$.
\label{prop:onetoone}
\end{restatable}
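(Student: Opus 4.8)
The plan is to show that the influence distribution is nothing more than a particular conditional of the global trajectory distribution, which is itself uniquely pinned down by the joint policy together with the (fixed) components of the fPOSG. First I would recall that, once a joint policy $\pi = \langle \pi_1, \dots, \pi_n \rangle \in \Pi$ is fixed, the initial state distribution, the transition function $T$, and the observation functions $\{O_i\}$ jointly determine a unique probability measure $P^\pi$ over global trajectories $\tau = \langle s^0, o^0, a^0, s^1, o^1, a^1, \dots \rangle$: this measure factorizes step by step, as an initial factor $b^0(s^0)\prod_{i} O_i(o_i^0\mid s^0)$ followed, at each $t$, by $\prod_{i} \pi_i(a_i^t\mid h_i^t)\, T(s^{t+1}\mid s^t,a^t)\,\prod_{i} O_i(o_i^{t+1}\mid s^{t+1})$, where each $h_i^t$ is itself a deterministic function of $\tau$. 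Both the influence source $u_i^t \in \times_{j} U_i^j$ and the ALSH $l_i^t = \langle x_i^1, a_i^1, \dots, a_i^{t-1}, x_i^t\rangle$ are deterministic (measurable) functions of $\tau$, since each $x_i^t$ is the projection of $s^t$ onto the local state variables $X_i$ and $u_i^t$ the projection onto $U_i$.

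Next I would invoke the definition of the influence distribution from the IBA framework \citep{oliehoek2021sufficient}: $I_i(u_i^t\mid l_i^t)$ is precisely the conditional probability of $u_i^t$ given $l_i^t$ under $P^\pi$. Since $P^\pi$ is uniquely determined by $\pi$, and a conditional distribution is uniquely determined by the joint distribution of the two quantities it relates — here $u_i^t$ and $l_i^t$ — it follows that, for every $t$ and every ALSH $l_i^t$ with $P^\pi(l_i^t) > 0$, the value $I_i(u_i^t\mid l_i^t)$ is unambiguously fixed by $\pi$ through the ratio $P^\pi(u_i^t, l_i^t)/P^\pi(l_i^t)$. The only point requiring a little care is the treatment of histories $l_i^t$ that are unreachable under $\pi$ (zero probability), on which this ratio is undefined; there I would adopt the convention already used in the framework (e.g.\ an arbitrary fixed distribution, or simply leaving it unspecified, as it never affects any simulated trajectory), which keeps the assignment $\pi \mapsto I_i$ single-valued. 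Collecting the per-agent conditionals then yields a well-defined map $\Pi \to \Psi$, $\pi \mapsto \langle I_1, \dots, I_n\rangle$, and in particular each $\pi$ induces exactly one $I_i \in \Psi_i$ for every $i \in N$.

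I do not expect a genuine obstacle here: the statement is essentially a well-definedness claim, and the content is bookkeeping about which objects are functions of which. The one spot that deserves attention is making the definition of $I_i$ explicit and consistent with \citet{oliehoek2021sufficient}, so that ``induces'' is read as ``equals the corresponding conditional of $P^\pi$'', and then checking that the induced object is a bona fide element of $\Psi_i$ — i.e.\ a valid probability distribution over the influence sources for each $l_i^t$ — which is immediate, since conditionals of a probability measure are themselves probability measures. (The substantive message that the surrounding discussion exploits is that this map is highly non-injective, so that many joint policies share the same $I_i$; but that is not what this lemma asserts.)
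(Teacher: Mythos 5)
Your proposal is correct, and it establishes the same fact as the paper, but by a noticeably more direct route. The paper argues by contradiction and works through the explicit recursive definition of the influence: it expands $I_i(u_i^t\mid l_i^t)$ as a sum over $u_i^{t-1}, y_i^{t-1}, a_{-i}^{t-1}$, observes that the one-step term is fixed by the environment's transition function (since $\langle x_i^{t-1}, u_i^{t-1}, y_i^{t-1}\rangle$ determines the Markov state), and then unrolls the belief term $P(u_i^{t-1}, y_i^{t-1}, h_{-i}^{t-1}\mid l_i^t)$ recursively back to $t=0$, noting that every factor involves only $T$, the observation functions, and the fixed $\pi_{-i}$, so the two candidate influences must coincide. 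You instead lift the argument to the trajectory level: $\pi$ together with the fixed model components pins down a unique measure $P^\pi$ over global trajectories, $u_i^t$ and $l_i^t$ are measurable functions of the trajectory, and $I_i(u_i^t\mid l_i^t)$ is by definition the corresponding conditional of $P^\pi$, hence single-valued wherever $P^\pi(l_i^t)>0$. What your version buys is brevity and a cleaner statement of why the map $\pi\mapsto I_i$ is well defined, plus an explicit treatment of unreachable ALSHs (zero-probability histories), a corner case the paper's unrolling silently passes over; what the paper's version buys is an explicit display of exactly where $\pi_{-i}$, $T$, and $O$ enter the influence computation, which is the structural information reused in the subsequent corollary on transition-independent influences. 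The only formal gap in yours is cosmetic: you invoke an initial state distribution $b^0$ that is not listed in the paper's fPOSG tuple, so you should either add it to the model or note it is implicitly assumed; with that, the argument is complete.
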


\begin{restatable}{proposition}{manytoone}
The space of joint policies $\Pi = \times_{i \in N} \Pi_i$  is necessarily greater than or equal to the space of joint influences $\Psi = \times_{i \in N} \Psi_i$,  $|\Pi| \geq |\Psi|$. Moreover, there exist local-form fPOSGs for which the inequality is strict.
\end{restatable}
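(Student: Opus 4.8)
The plan is to convert Lemma~\ref{prop:onetoone} into an elementary counting argument. By that lemma, sending a joint policy to the tuple of influence distributions it induces is a \emph{well-defined, single-valued} map $\Phi\colon\Pi\to\Psi$, $\Phi(\pi)=(I_i^{\pi})_{i\in N}$, where $I_i^{\pi}\in\Psi_i$ is the unique influence Lemma~\ref{prop:onetoone} associates to $\pi$ for agent $i$. Reading $\Psi$ as the set of \emph{jointly attainable} influence profiles, i.e.\ the image $\Phi(\Pi)$, the map $\Phi$ is surjective, and a surjection out of the nonempty set $\Pi$ admits a section $\Psi\hookrightarrow\Pi$ (a right inverse of $\Phi$), which is an injection; hence $|\Pi|\ge|\Psi|$. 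The single point that needs care is the definition of $\Psi$: if one instead takes $\Psi$ to be the full Cartesian product $\times_{i\in N}\Psi_i$ of the \emph{individually} realizable influence sets, then $\Phi(\Pi)$ may be a strict subset of $\Psi$ and the step above is not by itself enough, so I would state at the outset that $\Psi$ denotes the set of influence profiles that are realized by some joint policy (equivalently, $\Psi=\Phi(\Pi)$), which is also the reading under which "joint influence" is the natural object.

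For the strict inequality I would exhibit a local-form fPOSG in which $\Phi$ fails to be injective, so the section $\Psi\hookrightarrow\Pi$ is not onto. The cleanest witness is a fully decoupled system: take $n\ge 2$ agents, each with its own local region, with empty influence-source sets ($U_i=\emptyset$ for all $i$) and with $\dot T_i,\dot O_i,\dot R_i$ depending only on that agent's own local state and action, and with $|A_i|\ge 2$. This is a legitimate instance of Definition~\ref{def:local-FPOSG} and Definition~\ref{def:IALM} (it is just $n$ independent single-agent POMDPs), every $\Psi_i$ is the one-element set consisting of the unique distribution over the empty product, so $|\Psi|=1$, whereas $|\Pi|=\prod_{i\in N}|\Pi_i|\ge 2^{n}>1$. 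If an example with a genuinely non-trivial, history-dependent influence is preferred, the same collapse is obtained by \emph{redundant actions}: start from any local-form fPOSG with nonempty influence sources in which some agent $j$ has two actions that induce identical effects in $T$ (hence identical effects on every $U_i$, and so on every $I_i$); swapping one such action for the other inside $\pi_j$ leaves $\Phi(\pi)$ unchanged, so $\Phi$ identifies at least two distinct joint policies and $|\Pi|>|\Psi|$.

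I expect the main obstacle to be definitional rather than technical: making the cardinality comparison airtight requires committing to what $\Psi$ is (the image of $\Phi$ versus the full product of per-agent influence sets) and to a finiteness convention on $\Pi$ (e.g.\ deterministic policies, or a bounded horizon, so that "non-injective surjection" forces a \emph{strictly} smaller codomain), after which the argument reduces to two elementary facts: a surjection with nonempty domain has a section, and a non-injective surjection between finite sets strictly drops cardinality. Checking that the decoupled / redundant-action construction genuinely satisfies Definitions~\ref{def:local-FPOSG} and~\ref{def:IALM}, and that its influence space is as claimed, is routine.
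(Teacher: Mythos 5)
Your proposal is correct and follows essentially the same route as the paper: Lemma~\ref{prop:onetoone} makes the policy-to-influence map well defined, giving $|\Pi|\geq|\Psi|$, and strictness is shown by a transition-independent (decoupled) example in which all joint policies collapse to a single influence profile, which is exactly the paper's witness based on local transition independence \citep{Becker03AAMAS}. Your added care in reading $\Psi$ as the set of \emph{realizable} joint influence profiles (the image of the map) is a reasonable clarification of a point the paper leaves implicit, but it does not change the substance of the argument.
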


The advantages of this result were shown empirically by \cite{Witwicki10ICAPS}, who demonstrated that planning times can be reduced by searching the space of joint influences rather than the space of joint policies, which is often much larger. In fact, in the extreme case of local transition independence \citep{Becker03AAMAS},\footnote{As opposed to IBA, \citet{Becker03AAMAS} assume agents are tied by a shared global reward.} we have that for all joint policies $\pi$ there is a single $\{I_i\}_{i \in N}$
\begin{restatable}{corollary}{corollary1}
Let agent $i$'s influence sources $u_i^t$ be independent of the other agents' actions $a_{-i}$. Then, for any joint policy $\pi \in \Pi$, there is a unique influence distribution $I^*_i \in \Psi_i$ for every agent $i \in N$ and $|\Pi| \gg |\Psi| = 1$.
\label{col:col1}
\end{restatable}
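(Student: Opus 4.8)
The plan is to strengthen Lemma~\ref{prop:onetoone}: instead of ``$\pi$ induces \emph{one} $I_i$'', I want to show that under the hypothesis $u_i^t \ind a_{-i}$ the induced influence distribution is the \emph{same} for every $\pi \in \Pi$, so that the image $\Psi_i$ of the map $\pi \mapsto I_i$ is a singleton for each $i \in N$, whence $\Psi = \times_{i\in N}\Psi_i$ is a singleton and $|\Psi| = 1$. I would work throughout with the characterization of the influence distribution as the conditional $I_i(u_i^t\mid l_i^t) = P_\pi(u_i^t \mid l_i^t)$ computed in the probabilistic model that the environment and the joint policy $\pi$ jointly define over trajectories, where $l_i^t = \langle x_i^1, a_i^1, \dots, a_i^{t-1}, x_i^t\rangle$.

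First I would eliminate the dependence on $\pi_{-i}$. By the d-separation property stated beneath the DBN — the external variables $y_i$ and the other agents' actions $a_j$ act on $X_i$ only through $U_i$, so that $P(x_i^{t+1}\mid x_i^t, u_i^t, y_i^t, a_j^t) = P(x_i^{t+1}\mid x_i^t, u_i^t)$ — together with the corollary's hypothesis that $u_i^t$ itself does not depend on $a_{-i}$, the subnetwork generating $(u_i^{1:t}, x_i^{1:t})$ has only agent $i$'s own actions $a_i^{1:t-1}$ as policy-controlled inputs. Marginalizing out $a_{-i}$, the variables in $Y_i$, and the rest of the non-local state therefore leaves a joint distribution over $(u_i^{1:t}, x_i^{1:t}, o_i^{1:t-1}, a_i^{1:t-1})$ containing no factor that depends on $\pi_{-i}$: it is a product of the influence-source dynamics, the local transition factors $\dot{T}_i(x_i^{\tau+1}\mid x_i^\tau, u_i^\tau, a_i^\tau)$, the local observation factors $\dot{O}_i(o_i^\tau\mid x_i^\tau)$, and the policy factors $\pi_i(a_i^\tau\mid o_i^{1:\tau}, a_i^{1:\tau-1})$.

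Next I would show that the surviving $\pi_i$ factors cancel in the posterior. Here the \emph{local} form of the observation function (Definition~\ref{def:local-FPOSG}, $O_i(o_i^t\mid s^t) = \dot{O}_i(o_i^t\mid x_i^t)$) is essential: since each $o_i^\tau$ depends only on $x_i^\tau$ and $l_i^t$ already pins down the whole sequence $x_i^{1:t}$, the unobserved local observations are conditionally independent of $u_i^t$ given $l_i^t$. Concretely the joint above factors as $P_{\mathrm{env}}(u_i^t, x_i^{1:t}\mid a_i^{1:t-1})\cdot\prod_{\tau=1}^{t-1}\dot{O}_i(o_i^\tau\mid x_i^\tau)\,\pi_i(a_i^\tau\mid o_i^{1:\tau}, a_i^{1:\tau-1})$ with $P_{\mathrm{env}}$ determined purely by the environment. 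Summing over $o_i^{1:t-1}$, the policy-dependent sum $\sum_{o_i^{1:t-1}}\prod_\tau \dot{O}_i\,\pi_i$ is identical in the numerator $P_\pi(u_i^t, l_i^t)$ and in the denominator $P_\pi(l_i^t) = \sum_{u_i^t}P_\pi(u_i^t, l_i^t)$, so it cancels, yielding $I_i(u_i^t\mid l_i^t) = P_{\mathrm{env}}(u_i^t\mid x_i^{1:t}, a_i^{1:t-1})$. This quantity involves only the environment's transition and influence-source dynamics, hence is the same $I^*_i$ for every $\pi$. Therefore $|\Psi_i| = 1$ for each $i$, so $|\Psi| = \prod_i|\Psi_i| = 1$, while generically $|\Pi| = \prod_i|\Pi_i|$ is unboundedly large, giving $|\Pi| \gg |\Psi| = 1$, in line with the preceding Proposition.

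The step I expect to be the main obstacle is precisely the cancellation of \emph{agent $i$'s own} policy: a priori $\pi_i$ could alter $P_\pi(u_i^t\mid l_i^t)$, because $\pi_i$ conditions on the AOH, whose observation component is absent from the ALSH $l_i^t$. Making this rigorous rests entirely on the local-observation structure, which renders those latent observations uninformative about $u_i^t$ once $x_i^{1:t}$ is fixed, and on the fact that $a_i^{1:t-1}$ is itself part of the conditioning event $l_i^t$ (so we need not assume $u_i$ is independent of $a_i$, only of $a_{-i}$). I would be careful to state explicitly that the hypothesis $u_i^t \ind a_{-i}$ is used \emph{only} to remove the $\pi_{-i}$-dependence, the $\pi_i$-dependence being removed by the structural argument alone.
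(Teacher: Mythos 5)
Your proposal is correct and reaches the conclusion by the same conceptual route as the paper: the independence hypothesis makes the induced influence distribution identical for every joint policy, so the image $\Psi_i$ of the map $\pi \mapsto I_i$ is a singleton, $|\Psi|=1$, while $|\Pi|$ is huge. The difference is in what is made explicit. The paper's own argument is very terse: it translates the hypothesis into the one-step condition $P(u_i^t\mid x_i^{t-1},u_i^{t-1},y_i^{t-1},a^{t-1}) = P(u_i^t\mid x_i^{t-1},u_i^{t-1},y_i^{t-1},a_i^{t-1})$ and then directly asserts the existence of a unique $I_i^*$ for all $\pi$; the independence of $I_i$ from agent $i$'s \emph{own} policy is never argued, because it is inherited from the influence recursion used in the proof of Lemma~\ref{prop:onetoone}, in which only $\pi_{-i}$ terms appear. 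You instead re-derive the full policy-independence from scratch, and in particular supply the explicit cancellation of the $\pi_i$ and $\dot{O}_i$ factors between $P_\pi(u_i^t,l_i^t)$ and $P_\pi(l_i^t)$, using that observations depend only on local states already pinned down by the ALSH; this fills in a step the paper treats as definitional, and your remark that the hypothesis is needed only to kill the $\pi_{-i}$-dependence is accurate. Both arguments rely on the same (strong) reading of the hypothesis, namely that the entire influence-source process --- including any effect routed through $Y_i$ --- is unaffected by $a_{-i}$, i.e.\ transition independence in the sense of Becker et al.; you flag this explicitly, the paper leaves it implicit, so if anything your version is the more careful of the two.
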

 The result above implies that, in this particular case, we would only need to train the AIPs once at the beginning. Although we do not expect the situation in Corollary \ref{col:col1} to be the norm, we do believe that in many scenarios, such as in the two environments we explore here,  we would not need to retrain the AIPs very often because similar joint policies will influence the local regions in very similar, if not in the same, ways. Furthermore, the next result shows that even when this is not the case, an outdated $I_i$ computed from an old joint policy might still produce the same optimal policy for agent $i$.

\subsubsection{Multiple influence distributions may induce the same optimal policy}

We use the simulation lemma \citep{kearns2002near} to prove that if two influence distributions are similar enough they will induce the same optimal policy.
\begin{restatable}{lemma}{propositiontwo}
Let $M_i^1$ and $M_i^2$ be two IALMS differing only on their influence distributions $I_i^1(u_i^t|l_i^t)$ and $I_i^2(u_i^t|l_i^t)$. Let $Q^{\pi_i}_{M_i^1}$ and $Q^{\pi_i}_{M_i^2}$ be the value functions induced by $M_i^1$ and $M_i^2$ for the same $\pi_i$. If $I^1_i$ and $I^2_i$ satisfy
\begin{equation}
\small
    \sum_{l_i^t,u_i^t} P(l_i^t|h_i^t)  \left|I^1_i(u_i^t| l_i^t) - I^2_i(u_i^t| l_i^t) \right| \leq \xi 
\text{, then }
    \left|Q^{\pi_i}_{M_i^1}(h_i^{t}, a_i^t) - Q^{\pi_i}_{M_i^2}(h_i^{t}, a_i^t)\right| \leq \bar{R}\frac{(H-t)(H-t+1)}{2}\xi \qquad 
\end{equation}
for all $\pi_i$, $h_i^t$, and $a_i^t$, where $H$ is the horizon and $\bar{R} = || R||_\infty$ 
\label{lemma:two}
\end{restatable}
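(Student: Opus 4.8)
The plan is to adapt the standard simulation lemma argument to the IALM setting, working with value functions expressed in terms of the agent's action-observation history $h_i^t$ rather than the action-local-state history $l_i^t$, since the two IALMs share everything (local transition $\dot T_i$, observation function $\dot O_i$, reward $\dot R_i$) except their influence distributions. First I would write the $Q$-values recursively: since $M_i^1$ and $M_i^2$ differ only in $I_i^1$ vs $I_i^2$, and since by Lemma~\ref{prop:onetoone} / Eq.~\eqref{eq:IALM} the effective local transition kernel is $P(x_i^{t+1}\mid l_i^t, a_i^t) = \sum_{u_i^t}\dot T_i(x_i^{t+1}\mid x_i^t,u_i^t,a_i^t)\, I_i(u_i^t\mid l_i^t)$, the difference in one-step transition probabilities between the two models is controlled pointwise by $\sum_{u_i^t}|I_i^1(u_i^t\mid l_i^t) - I_i^2(u_i^t\mid l_i^t)|$ (because $\dot T_i$ is a probability distribution, summing to one, so it does not amplify the gap). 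Then I would expand $Q^{\pi_i}_{M_i^1} - Q^{\pi_i}_{M_i^2}$ by a telescoping / coupling argument over the remaining $H-t$ steps: at each step the discrepancy picks up one fresh term bounded by $\bar R$ times the per-step transition gap, plus the propagated future discrepancy (bounded by $\bar R(H-t)$ at worst for the tail), and summing $\bar R \sum_{k=1}^{H-t} k = \bar R (H-t)(H-t+1)/2$ gives the claimed bound once the per-step transition gap is integrated against $P(l_i^t\mid h_i^t)$.

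The key technical point is that the bound is stated with an average over $l_i^t$ weighted by the belief $P(l_i^t\mid h_i^t)$, not a sup-norm bound on the influence distributions. So the telescoping has to be done carefully: I would define the running error $\Delta^t(h_i^t,a_i^t) = |Q^{\pi_i}_{M_i^1}(h_i^t,a_i^t) - Q^{\pi_i}_{M_i^2}(h_i^t,a_i^t)|$ and show, using the law of total expectation over $l_i^t$ given $h_i^t$ (both models agree on $O_i$, $\dot R_i$ and on how $h_i^t$ is generated from the underlying $l_i^t$, $u_i^t$ process, since they only differ in $I_i$), that $\Delta^t \le \bar R\, \xi^t + \EX_{h_i^{t+1}}[\Delta^{t+1}]$ where $\xi^t = \sum_{l_i^t,u_i^t} P(l_i^t\mid h_i^t)|I_i^1 - I_i^2|$. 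One must argue that the weight appearing when we roll forward one step and re-condition is again of the form $P(l_i^{t'}\mid h_i^{t'})$, i.e. that beliefs over ALSH propagate consistently; this follows because the IALM is an exact abstraction (cited from \citet{oliehoek2021sufficient}) so the $h$-conditioned distribution over $l$ is well-defined and Markov-consistent in both models up to the influence term. Unrolling the recursion from $t$ to $H$ and bounding each $\xi^{t'}\le\xi$ (possibly after noting the hypothesis is assumed to hold at the relevant histories, or arguing that the expectation over later histories of $\xi^{t'}$ is still $\le\xi$ by the tower property), the coefficients accumulate as $\bar R(1 + 2 + \cdots + (H-t)) = \bar R (H-t)(H-t+1)/2$.

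The main obstacle I anticipate is making the ``weighted'' simulation lemma rigorous: the classical simulation lemma (Kearns--Singh) is stated for MDPs with sup-norm transition error, whereas here we have a POMDP-like object (the IALM, which behaves as a POMDP from agent $i$'s view with state $l_i^t$ or $\langle s^t, h_{-i}^t\rangle$) and a history-averaged error. I need to be careful that when I propagate the error forward, the measure I average against at step $t+1$ is genuinely $P(l_i^{t+1}\mid h_i^{t+1})$ and not something that depends on which model generated the trajectory — this requires that the two models induce the same distribution over $h_i^{t+1}$ given $(h_i^t, a_i^t)$ up to exactly the discrepancy we are tracking, so there is a mild circularity that must be broken by a coupling argument (couple the two IALM trajectories on the same $u_i^t$ realizations wherever possible, and pay $\xi^t$ for the mismatch). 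A secondary, more routine concern is bookkeeping the $(H-t)$ versus $(H-t+1)$ indices in the arithmetic series and confirming the discount-free, finite-horizon return convention matches $\bar R = \|R\|_\infty$; I would handle this by writing $G^t = \sum_{\tau=t}^{H-1} \dot R_i(x_i^\tau, a_i^\tau)$ explicitly so each of the $H-t$ remaining rewards contributes a term whose error is bounded by $\bar R$ times the cumulative transition gap up to that time.
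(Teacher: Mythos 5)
Your proposal follows essentially the same route as the paper's proof: a one-step lemma bounding the AOH-transition discrepancy $\sum_{h_i^{t+1}}|P^1-P^2|$ by the belief-weighted TV distance $\sum_{l_i^t,u_i^t}P(l_i^t|h_i^t)|I_i^1-I_i^2|$ (using that $\dot O_i$ and $\dot T_i$ sum out), then the standard add-and-subtract simulation-lemma telescoping with the tail value bounded by $\bar R(H-t)$, accumulating the arithmetic series $\bar R\sum_{k=t}^{H}(H-k)\xi = \bar R\frac{(H-t)(H-t+1)}{2}\xi$. One small correction: the recursion you display, $\Delta^t \le \bar R\,\xi^t + \EX[\Delta^{t+1}]$, would only yield a bound linear in the horizon; the fresh term must be $\bar R(H-t)\,\xi^t$ (per-step transition gap times the bound on the propagated future value), which is what your first paragraph and your final summation in fact use, and is exactly how the paper closes the argument.
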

Intuitively, Lemma \ref{lemma:two} shows that the difference in value between $M_i^1$ and $M_i^2$ is upper-bounded by the maximum difference between $I_i^1$ and $I_i^2$ times a constant.
Actually, if the \emph{action-gap} \citep{farahmand2011action} (i.e. value difference between the best and the second best action) in one of the IALMs is larger than twice the difference between the $Q_{M_1}^{\pi_i}$ and $Q_{M_2}^{\pi_i}$ the IALMs share the same optimal policy.
\begin{restatable}{theorem}{theoremtwo}
Let $M_i^1$ and $M_i^2$ be two IALMS differing only on their influence distributions $I_i^1(u_i^t|l_i^t)$ and $I_i^2(u_i^t|l_i^t)$. $M_i^1$ and $M_i^2$ induce the same optimal policy $\pi^*$ if, for some $\Delta$, 
\begin{equation}
\small
    Q^{\pi_i^*}_{M_i^1}(h_i^t, \bar{a}_i^t) - Q^{\pi_i^*}_{M_i^1}(h_i^{t}, {\hat{a}_i^t}) > 2\Delta \quad \forall h_i^t, \hat{a}_i^t \neq \bar{a}_i^t
    \text{ with }
  \left|Q^{\pi_i}_{M_i^1}(h_i^t, a_i^t) - Q^{\pi_i}_{M_i^2}(h_i^{t}, a_i^t)\right| \leq \Delta  \quad \forall h_i^t, a_i^t, \pi_i,
\end{equation}
where
$\bar{a}_i^t = \argmax_{a_i^t} Q_{M_i^1}^{\pi_i^*}(h_i^t, a_i^t)$
\label{thm:two}
\end{restatable}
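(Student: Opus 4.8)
The plan is to run a standard \emph{action-gap} argument, with the one genuine subtlety that the notion of optimality must be recast through the policy-improvement (Bellman-optimality) characterization, so that the only cross-model comparison ever needed is between $Q$-values of the \emph{same} policy, which is exactly what the hypothesis controls.

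First I would note that fixing the other agents' policies $\pi_{-i}$ turns agent $i$'s problem into a finite-horizon MDP whose state is the AOH $h_i^t$, so a deterministic optimal policy $\pi_i^*$ exists for $M_i^1$. The first displayed hypothesis, $Q^{\pi_i^*}_{M_i^1}(h_i^t,\bar a_i^t) - Q^{\pi_i^*}_{M_i^1}(h_i^t,\hat a_i^t) > 2\Delta > 0$ for every $\hat a_i^t \neq \bar a_i^t$, forces $\bar a_i^t$ to be the \emph{unique} maximizer of $Q^{\pi_i^*}_{M_i^1}(h_i^t,\cdot)$ at each history; since $\pi_i^*$ is optimal it is greedy with respect to its own value function, hence $\pi_i^*(h_i^t)=\bar a_i^t$ at every reachable $h_i^t$, and $\pi_i^*$ is the unique optimal policy of $M_i^1$.

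Next I would transport this strict gap to $M_i^2$ using only the per-policy bound in the second hypothesis (which is precisely the conclusion of Lemma~\ref{lemma:two} once $\xi$ is small enough that $\bar{R}(H-t)(H-t+1)\xi/2 \leq \Delta$). Taking $\pi_i=\pi_i^*$ in that bound, for any $h_i^t$ and any $\hat a_i^t \neq \bar a_i^t$,
\begin{align*}
Q^{\pi_i^*}_{M_i^2}(h_i^t,\bar a_i^t) - Q^{\pi_i^*}_{M_i^2}(h_i^t,\hat a_i^t)
&\geq \bigl(Q^{\pi_i^*}_{M_i^1}(h_i^t,\bar a_i^t) - \Delta\bigr) - \bigl(Q^{\pi_i^*}_{M_i^1}(h_i^t,\hat a_i^t) + \Delta\bigr) \\
&= Q^{\pi_i^*}_{M_i^1}(h_i^t,\bar a_i^t) - Q^{\pi_i^*}_{M_i^1}(h_i^t,\hat a_i^t) - 2\Delta > 0,
\end{align*}
so $\bar a_i^t$ is also the unique maximizer of $Q^{\pi_i^*}_{M_i^2}(h_i^t,\cdot)$ at every history; i.e. $\pi_i^*$ is greedy with respect to its own value function $Q^{\pi_i^*}_{M_i^2}$ in $M_i^2$.

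Finally I would invoke the policy-improvement theorem in the history-MDP: a deterministic policy that is greedy with respect to its own $Q$-function satisfies the Bellman optimality equations and is therefore optimal. Applied to $M_i^2$ this shows $\pi_i^*$ is optimal there, and uniqueness follows from the strict inequality just derived; hence $M_i^1$ and $M_i^2$ induce the same optimal policy $\pi^*$. The main obstacle is conceptual rather than computational: the hypothesis bounds $|Q^{\pi_i}_{M_i^1}-Q^{\pi_i}_{M_i^2}|$ only for a \emph{fixed} $\pi_i$, whereas ``same optimal policy'' is a statement about comparisons across policies; the resolution is to never compare across policies directly and instead use the greedy-with-respect-to-own-value characterization of optimality, so that the single comparison between $Q^{\pi_i^*}_{M_i^1}$ and $Q^{\pi_i^*}_{M_i^2}$ suffices. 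A minor secondary check is that a deterministic optimal policy exists and that policy improvement is valid here, both of which are immediate once the AOH is regarded as the state of an ordinary finite-horizon MDP.
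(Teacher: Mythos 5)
Your proof is correct and is essentially the paper's argument in contrapositive form: the paper assumes $\pi^*$ is optimal for $M_i^1$ but not for $M_i^2$, chains $Q^{\pi^*}_{M_i^1}(h_i^t,\bar a_i^t)-\Delta \leq Q^{\pi^*}_{M_i^2}(h_i^t,\bar a_i^t) < Q^{\pi^*}_{M_i^2}(h_i^t,\hat a_i^t) \leq Q^{\pi^*}_{M_i^1}(h_i^t,\hat a_i^t)+\Delta$, and contradicts the $2\Delta$ action gap---exactly the inequality you derive, read in the other direction. Your only addition is making explicit the ``greedy with respect to its own $Q$-function implies optimal'' step (policy improvement in the finite-horizon AOH-MDP), which the paper leaves implicit when it equates ``not optimal for $M_i^2$'' with the existence of a history where another action has higher $Q^{\pi^*}_{M_i^2}$-value.
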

Combining Lemma \ref{lemma:two} and Theorem \ref{thm:two}, we see that, because the difference in value between $M_i^1$ an $M_i^2$ depends on $\xi$ (Lemma \ref{lemma:two}),
the closer the distributions $I_i^1$ and $I_i^2$ are, the more likely it is that  $M_i^1$ and $M_i^2$ share the same optimal policy. Note that we have no control over the action gap as it is domain-dependent. In some domains, the gap might be large and we can be more relaxed about not retraining the AIPs. In some others, the gap might be small and we may need to retrain the AIPs more frequently.

\subsection{Algorithm}\label{sec:algo}
After the analysis above, we are now ready to present our method, which we call Distributed Influence-Augmented Local Simulators (DIALS). Algorithm \ref{alg:DIALS} describes how we can train multi-agent systems with DIALS. As mentioned earlier, the key advantage of using DIALS is that simulations can be distributed among different processes, and thus training can be fully parallelized. This enables MARL to scale to very large systems with many learning agents. Moreover, following from our theoretical results, AIP training, which can also be done in parallel, is performed only every certain number of timesteps. The hyperparameter $F$ in Algorithm \ref{alg:DIALS} controls the AIPs' training frequency. The effect of $F$ on the learning performance is empirically investigated in Section \ref{sec:experiments}.
\algblock{ParFor}{EndParFor}
\algnewcommand\algorithmicparfor{\textbf{in parallel, for}}
\algnewcommand\algorithmicpardo{\textbf{do}}
\algnewcommand\algorithmicendparfor{\textbf{end\ for}}
\algrenewtext{ParFor}[1]{\algorithmicparfor\ #1\ \algorithmicpardo}
\algrenewtext{EndParFor}{\algorithmicendparfor}
\vspace{-5pt}
\begin{algorithm}
\caption{MARL with DIALS}
\begin{algorithmic}[1]
\small
  \State Initialize policies $\{\pi_i\}_{i\in N}$ and AIPs $\{\hat{I}_{\theta_i}\}_{i \in N}$
  \Repeat
  \State Collect datasets $\{D_i\}_{i \in N}$ from GS
  \Comment{See Algorithm \ref{alg:collect} in Appendix \ref{ap:algorithms}}
  \ParFor{$i \in N$} 
  \State  Train AIP $\hat{I}_{\theta_i}$ on dataset $D_i$
  \Comment{See Section \ref{sec:IALS}}
  \EndParFor
  \ParFor{$i \in N$} 
  \For{$F$ steps}
  \Comment{$F$ is the AIPs' training frequency}
  \State  Simulate trajectories with IALS $\langle \dot{T}_i, \dot{R}_i, \dot{O}_i, \hat{I}_{\theta_i}\rangle$
  \Comment{See Algorithm \ref{alg:sample} in Appendix \ref{ap:algorithms}}
  \State Train policy $\pi_i$
  \Comment{Using any standard RL method}
  \EndFor
  \EndParFor
  \Until{end of training}
\end{algorithmic}
\label{alg:DIALS}
\end{algorithm}
\vspace{-5pt}

\subsection{Mitigating the negative effects of simultaneous learning}\label{sec:mitigating}

The results in the previous section showed that the AIPs may not need to be retrained every single time the policies are updated, as the influence distributions may often stay the same or vary only a little. Yet, we now argue that, even when changes in the joint policy do affect the influence distributions $I_i(u_i^t | l_i^t)$ significantly, it may be advantageous not to retrain the AIPs. 

First, we have already mentioned that when all agents learn simultaneously in the same simulator the transition dynamics often look non-stationary from the perspective of each individual agent. This may result in sudden performance drops caused by oscillations in the value targets \citep{Claus98AAAI}. In contrast, when using independent IALS to train our agents, the transition dynamics remain stationary unless we update the AIPs. Hence, by not updating the AIPs too frequently, we get a biased but otherwise more consistent learning signal that the agents can rely on to improve their policies.

Second, we posit that the poor empirical convergence of many off-the-shelf Deep RL methods \citep{hernandez2017survey, yu2021surprising} is also because stochastic gradient descent updates often result in policies that perform worse than the previous ones. Thus, when learning together, agents may try to adapt to other agents' poor performing policies. These policies, however, are likely to be temporary as they are just a result of the inherent stochasticity of the learning process. Similarly, in many environments, agents shall take exploratory actions before they can improve their policies, which may also negatively impact cooperation if they learn simultaneously \citep{zhang2009integrating, zhang2010self}. In our case, we can again benefit from the fact that the AIPs need to be purposely retrained,
and do so only when the policies of the other agents have improved sufficiently. 

Even though further theoretical analysis would be needed to be more conclusive about the benefits of using independent simulators, the observations above give reasons to believe that what we initially described as a problem may in fact be an advantage. This view is also supported by our experiments.





\section{Experiments}\label{sec:experiments}

The goal of the experiments is to: (1) test whether we can reduce training times by replacing GS with DIALS, (2) investigate how the method scales to large environments with many learning agents, (3) evaluate the convergence benefits of using separate simulators to train agents rather than a single GS, and (4) study the effect of the AIPs' training frequency $F$ on the agents' learning performance.
\subsection{Experimental setup}\label{sec:setup}
Agents are trained independently with PPO \citep{schulman2017proximal}\footnote{The vanilla PPO algorithm with decentralized value functions (independent PPO; IPPO) has been shown to perform exceptionally well on several multi-agent environments \citep{de2020independent, yu2021surprising}.} on (1) the global simulator (GS), (2) distributed influence-augmented local simulators (DIALS) with AIPs trained periodically on datasets collected from the GS using the most recent joint policy, (3) DIALS with untrained AIPs (untrained-DIALS).

To measure the agent's performance, training is interleaved with periodic evaluations on the GS. The results are averaged over 10 random seeds on all except on the largest scenarios ($10 \times 10$) for which, due to computational limitations we could only run 5 seeds. We report the mean return of all learning agents. We also compare the simulators in terms of total runtime. For DIALS this includes the agents' training time, the AIPs' training time, and the time for data collection.
\subsection{Environments}
\paragraph{Traffic control}\label{sec:traffic}
The first domain we consider is a multi-agent variant of the traffic control benchmark proposed by \cite{vinitsky2018benchmarks}. In this scenario, agents are requested to manage the lights of a big traffic network. Each agent controls a single traffic light and can only observe cars when they are inside the intersection's local neighborhood. Their goal is to maximize the average speed of cars within their respective intersections. To demonstrate the scalability of the method we evaluate DIALS on four different variants of the traffic network with $4$, $25$, $49$, and even $100$ intersections (agents). A screenshot of the traffic network with $25$ intersections is shown in Appendix \ref{ap:screenshots}. The GS  and LS are built using Flow (MIT License) \citep{wu2017flow} and SUMO (Eclipse Public License Version 2) \citep{SUMO2018}. The GS simulates the entire traffic network while each LS models only the local neighborhood of each intersection $i \in N$ (Figure \ref{fig:local_sim} in Appendix \ref{ap:screenshots}). We use the same LS for every agent-intersection but since, depending on where they are located, they are influenced differently by the rest of the traffic network, we have separate AIPs, $\{I_{\theta_i}\}_{i \in N}$, for each them. These are feedforward neural networks with the same architecture but different weights $\theta_i$, trained periodically with frequency $F$ on datasets $\{D_i\}_{i \in N}$  collected from the GS. The influence sources $u^i_t$ are binary variables indicating whether or not a car will be entering from each of the four incoming lanes. 


\paragraph{Warehouse Commissioning}
The second domain we consider is a warehouse commissioning task \citep{suau2022influence}. A team of robots (blue) needs to fetch the items (yellow) that appear with probability $0.02$ on the shelves (dashed black lines) of the warehouse (see Figure \ref{fig:global_sim} in Appendix \ref{ap:screenshots}). Each robot has been designated a $5 \times 5$ square region and can only collect the items that appear on the shelves at the edges. The regions overlap so that each of the $4$ item shelves in a robot's region is shared with one of its $4$ neighbors. The robots receive a reward between $[0, 1]$ when collecting an item. The exact value depends on how old the item is compared to the other items in their region. This is to encourage the robots to collect the oldest items first. The robots receive as observations a bitmap encoding their own location and a set of $12$ binary variables that indicate whether or not a given item needs to be collected. The robots, however, cannot see the location of the other robots even though all of them are directly or indirectly influencing each other through their actions. We built four variants of the warehouse with $4$, $25$, $49$, and $100$ robots (agents). A screenshot of the warehouse with $25$ robots is shown in Appendix \ref{ap:screenshots}. The GS simulates the entire warehouse while the LS models only a $5 \times 5$ square region (Figure \ref{fig:local_sim} in Appendix \ref{ap:screenshots}). We use the same LS for every robot (agent) but since depending on where they are located they are influenced differently by the rest of the robots, we have separate AIPs, $\{\hat{I}_{\theta_i}\}_{i \in N}$, for each of them. These are GRUs \citep{Cho2014Learning} with the same architecture but different weights $\theta_i$, which we train periodically with frequency $F$ on datasets $\{D_i\}_{i \in N}$ collected from the GS. Robot $i$'s influence sources $u_i^t$ encode the location of the four neighbor robots. If its AIP $\hat{I}_{\theta_i}$ predicts that any of the neighbor robots is at one of the $12$ cells within its region, and there is an active item on that cell, that item is removed and robot $i$ can no longer collect it.

\subsection{Results}\label{sec:results}
\paragraph{GS vs. DIALS}
The two plots on the left of Figures \ref{fig:traffic_plot} and \ref{fig:warehouse_plot} show the average return as a function of the number of timesteps obtained with GS, DIALS, and untrained-DIALS on the $4$-agent traffic and warehouse environments. Shaded areas indicate the standard error of the mean. Agents are trained for 4M timesteps on all three simulators.
The results reveal that, while agents trained on DIALS seem to converge steadily towards similar high-performing policies, agents trained with the GS often get stuck in local minima, hence the poor mean episodic reward and large standard error obtained with the GS relative to that of the DIALS. In contrast, the low performance of agents trained with the untrained-DIALS indicates that estimating the influences correctly is important for learning good policies. It is worth noting that the gap between the GS and the DIALS is larger in Figure \ref{fig:warehouse_plot} than in Figure \ref{fig:traffic_plot}. We posit that this is because, in the warehouse domain, agents are more strongly coupled.
For comparison, the dashed-black lines in the plots on the left of Figures \ref{fig:traffic_plot} and \ref{fig:warehouse_plot} show the performance of hand-coded policies. For the traffic domain, we used fixed traffic light controllers that were extensively optimized by \citet{wu2017flow}. For the warehouse domain, we hand-coded policies that follow the shortest path toward the oldest item in the agent's region. 
The learning curves for the other scenarios are provided in Appendix \ref{ap:results} together with further discussion on these results.\footnote{A video showing the GS of the traffic network and one of the IALS is provided at \url{https://youtu.be/DgVE6OIQQz8}.}\footnote{A video showing how agents trained with DIALS perform on the 100-agent variant of the traffic scenario is provided at \url{https://youtu.be/G9EthZ-G3vo}.}
\begin{figure}
\vspace{-10pt}
     \centering
     \begin{subfigure}[b]{0.49\textwidth}
         \centering
         \includegraphics[width=\textwidth]{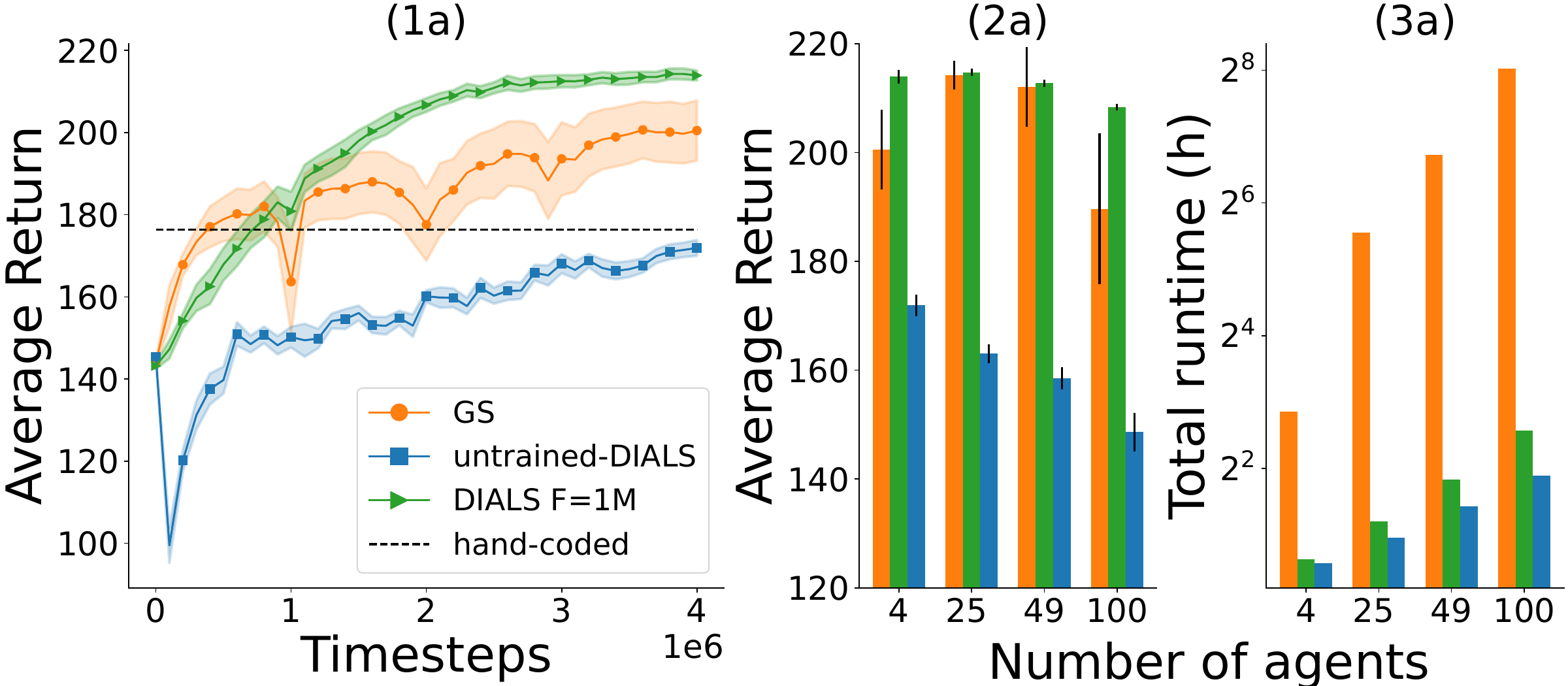}
         \caption{Traffic}
         \label{fig:traffic_plot}
     \end{subfigure}
     \hfill
     \begin{subfigure}[b]{0.49\textwidth}
         \centering
         \includegraphics[width=\textwidth]{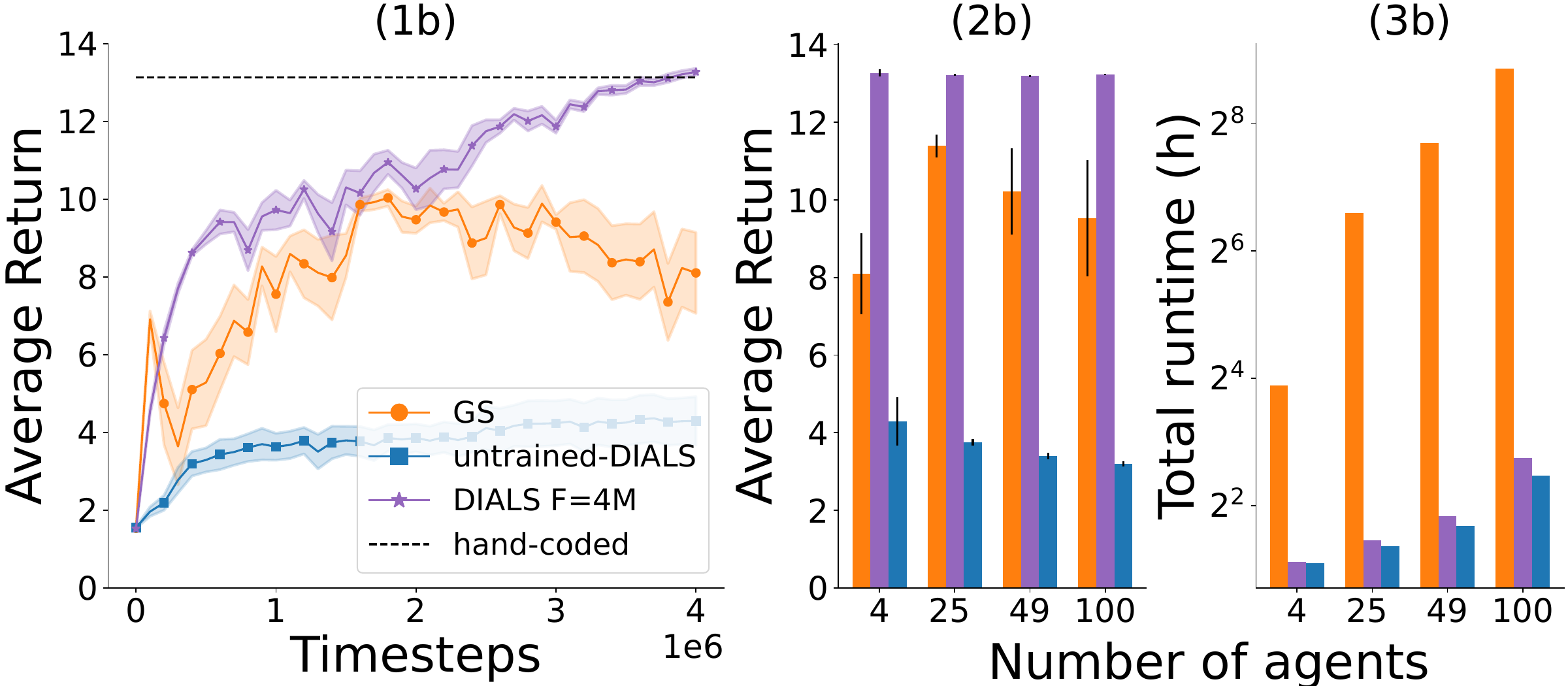}
         \caption{Warehouse}
         \label{fig:warehouse_plot}
     \end{subfigure}
      \caption{\textbf{(1a) and (1b)} Learning curves with the three simulators on the $4$-intersection traffic and 4-robot warehouse environments. \textbf{(2a) and (2b):} Final average return of agents trained with the three simulators for 4M timesteps. \textbf{(3a) and (3b):} Total runtime of training with the three simulators for 4M timesteps. The $y$-axis is in $\log_2$ scale.}
      \label{fig:plots}
\end{figure}
\paragraph{Scalability} The benefits of parallelization are more apparent when moving to larger environments. The two bar plots in Figures \ref{fig:traffic_plot} and \ref{fig:warehouse_plot} depict the final average return and the total run time of training $4$, $25$, $49$, and $100$ agents on the two tasks for $4$M timesteps. The plots show that DIALS scales far better than the GS to larger problem sizes, while also yielding better-performing policies. For example, training $\mathbf{100}$ \textbf{agents} on the traffic network takes less than $\mathbf{6}$ \textbf{hours} with the DIALS, whereas training them with the GS would take more than $\mathbf{10}$ \textbf{days}. This is a \textbf{speedup factor of }$\mathbf{40}$. In fact, since the maximum execution time allowed by our computer cluster is $1$ week, the results reported for the GS in the scenarios of size $10 \times 10$ do not correspond to 4M timesteps but the equivalent of $1$ week of training. We would also like to point out that, disregarding the overhead costs associated with multiprocessing, the DIALS runtime should remain constant independently of the problem size. However, 
 to update the AIPs, new samples are collected from the GS, which does increase the runtime. This explains the gap between DIALS and untrained-DIALS. That said, the number of samples needed to update the AIPs (80K for traffic and 10K for warehouse) is significantly lower than the samples needed to train the agents (4M), which is why the runtime difference between GS and DIALS is so large.
A table with a breakdown of the runtimes is given in Appendix \ref{ap:runtimes}.


\paragraph{AIPs' training frequency}
Our first results have already demonstrated that isolating the agents in separate simulators and not updating the AIPs too frequently can be beneficial for convergence. We now further investigate this phenomenon by evaluating the agents' learning performance for different values of the hyperparameter $F$. The two plots on the left of Figures \ref{fig:DIALS_comparison_traffic} and \ref{fig:DIALS_comparison_warehouse} show the learning curves for agents trained on DIALS where $F$ is set to $100$K, $500$K, $1$M, and $4$M timesteps. In the traffic domain, the gap between the green and the purple curve (Figure \ref{fig:DIALS_comparison_traffic}) suggests that it is important to retrain the AIPs at least every $1$M timesteps, such that agents become aware of changes in the other agents' policies. In contrast, in the warehouse domain (Figure \ref{fig:DIALS_comparison_warehouse}), we see that training the AIPs only once at the beginning (DIALS $F=4$M) seems sufficient. In fact, updating the AIPs too frequently (DIALS $F=100$K) is detrimental to the agents' performance. This is consistent with our hypothesis in Section \ref{sec:mitigating}. The plots on the right show the average cross-entropy (CE) loss of the AIPs  evaluated on trajectories sampled from the GS. As explained in Section \ref{sec:DIALS} since all agents learn simultaneously, the influence distributions $\{I(u_i^t |l_i^t)\}_{i \in N}$ are non-stationary. For this reason, we see that the CE loss changes as the policies of the other agents are updated. We can also see how the CE loss decreases when the AIPs are retrained, which happens more or less frequently depending on the hyperparameter $F$. Note that the CE not only measures the distance between the two probability distributions but also the absolute entropy. In the warehouse domain, the neighbor robots' locations become more predictable (lower entropy) as their policies improve. This explains why in the first plot from the right the CE loss decreases even though the AIPs are not updated. Also in the same plot, even though by the end of training DIALS $F=4$M is highly inaccurate, as evidenced by the gap between the purple and the other curves, it is still good enough to train policies that match the performance of those trained with DIALS $F=500$K and $F=1$M. This is in line with our results in Section \ref{sec:DIALS}. The same plots for the rest of the scenarios are provided in Appendix \ref{ap:results}.


\begin{figure}
     \centering
     \begin{subfigure}[b]{0.49\textwidth}
         \centering
         \includegraphics[width=\textwidth]{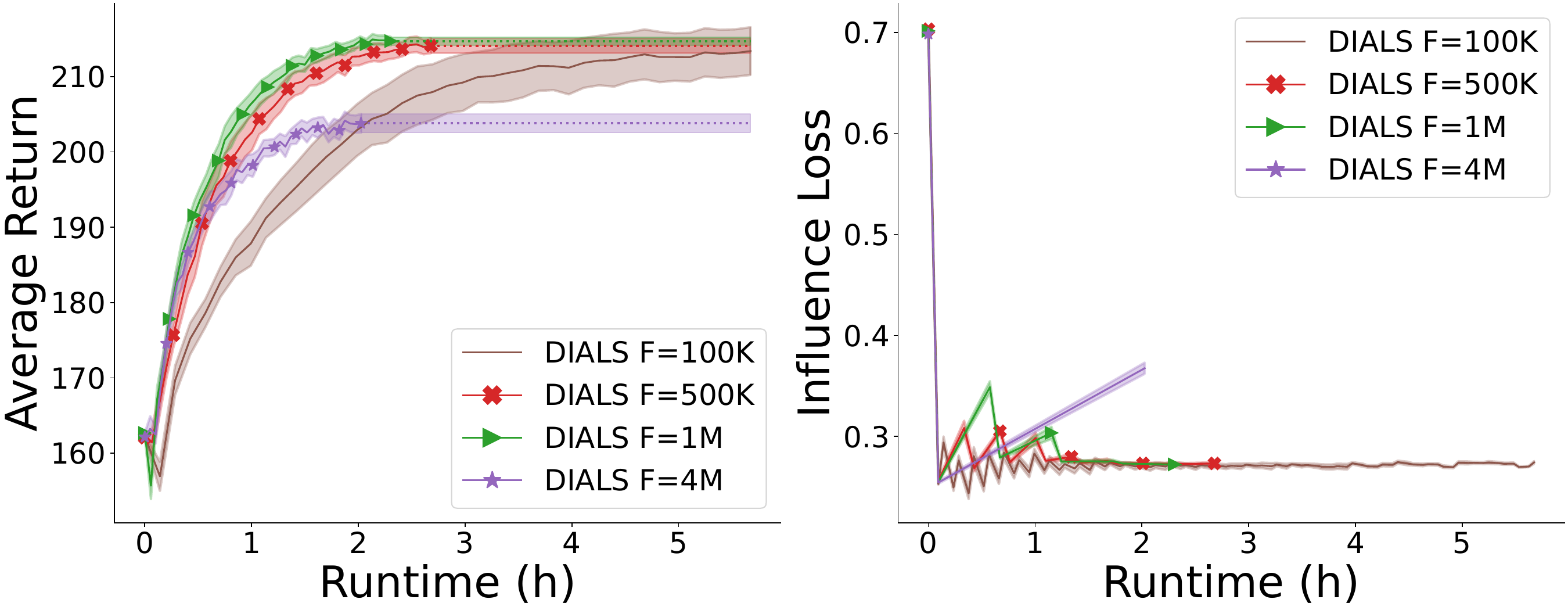}
         \caption{Traffic $25$ agents}
         \label{fig:DIALS_comparison_traffic}
     \end{subfigure}
     \hfill
     \begin{subfigure}[b]{0.49\textwidth}
         \centering
         \includegraphics[width=\textwidth]{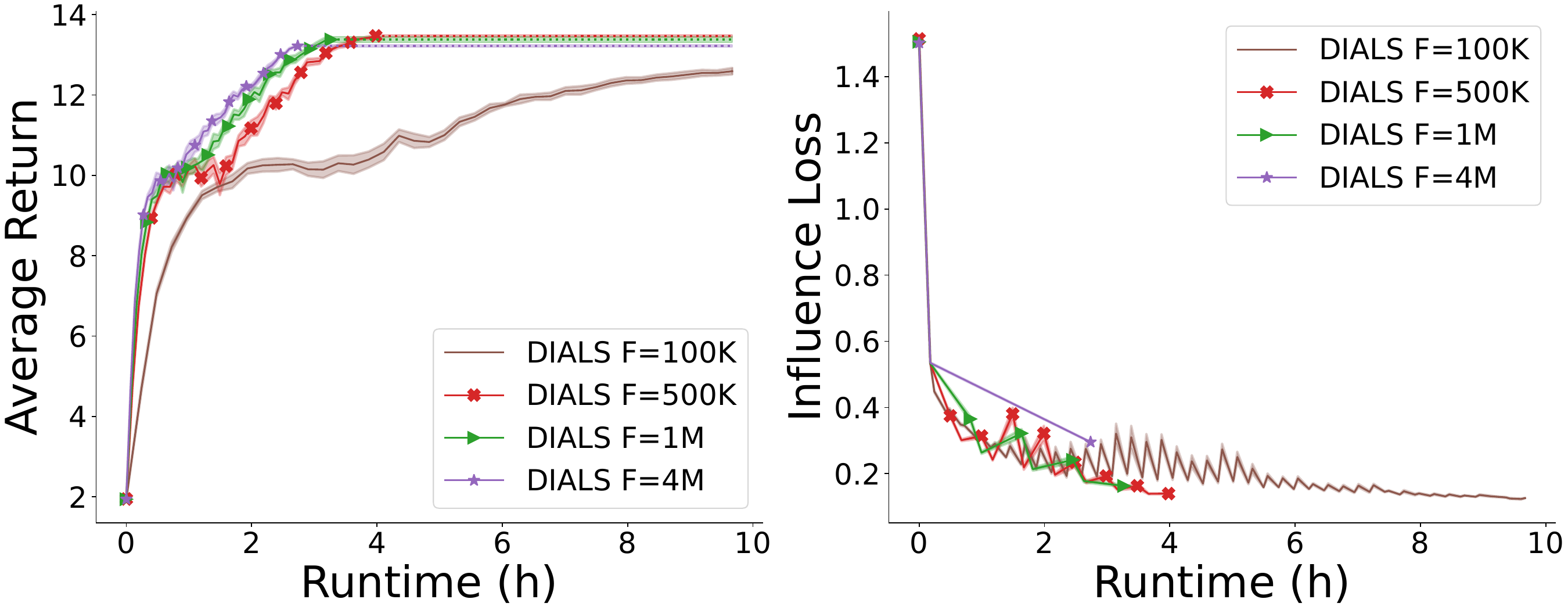}
         \caption{Warehouse $25$ agents}
         \label{fig:DIALS_comparison_warehouse}
     \end{subfigure}
     \caption{\textbf{Left (a) and (b):} Learning curves with DIALS for different values of $F$ on the $25$-agent versions of the two environments. \textbf{Right (a) and (b):} Influence CE loss as a function of runtime averaged over the $25$ AIPs.}
\end{figure}
\section{Scope and Limitations}

DIALS targets networked environments with well-defined local regions where the interactions between different regions occur through a limited number of variables. There is plenty of examples of domains that have this particular structure including traffic, heating and water systems, logistics, telecommunications, etc. Knowledge of the influence sources $U$ and how these affect the local regions is required for building the DIALS. In most cases, however, (as in the two environments we explored here) some domain knowledge suffices to be able to tell what the influence sources are. 

Moreover, having or being able to build high-fidelity local simulators of these local regions is also a requirement. Fortunately, there exist plenty of simulators of real systems that can readily be used such as, SUMO \citep{SUMO2018}, Robosuite \citep{zhu2020robosuite}, BRAX \citep{freeman2021brax}. There is also a lot of commercial software for building custom-made simulators such as Mujoco \citep{todorov2012mujoco}, or Unity \citep{juliani2018unity}. Also, note that most (if not all) of the work that has applied RL to real-world problems relies on simulation to train the policies offline \citep{bellemare2020autonomous, degrave2022magnetic}. Hence, we believe that DIALS can have a strong impact on many real-world applications.

Finally, although the experiments reveal that DIALS can considerably accelerate training times, it is also memory-demanding. As shown in Appendix \ref{ap:memory} (Table \ref{tab:memory}), the total memory usage with DIALS increases exponentially with the number of simulators/processes. There is thus a trade-off between fast computation and total memory needed. Note, however, that the memory is split among the different processes. Hence, rather than using a big machine with large memory, DIALS can run on several smaller ones with less memory.

\section{Conclusion}
This paper has offered a practical solution that allows training large networked systems with many agents in just a few hours. We showed how to factorize these systems into multiple sub-regions such that we could build distributed influence-augmented local simulators (DIALS). 

The key advantage of DIALS is that simulations can be distributed among different processes, and thus training can be fully parallelized. To account for the interactions between the different sub-regions, the simulators are equipped with approximate influence predictors (AIPs), which are trained periodically on real trajectories sampled from a global simulator (GS). We demonstrated that, although using DIALS agents learn simultaneously, training the AIPs very frequently is neither necessary nor desirable. Our results reveal that DIALS not only enables MARL to scale up but also mitigates the non-stationarity issues of simultaneous learning. 

Future work could analyze this phenomenon from a theoretical perspective, study how to adapt DIALS to more strongly coupled domains where frequent training of the AIPs is important, or design a method to directly estimate how the changes in the agents’ policies affect the influence distributions. This is so that the AIPs can readily be updated without having to run the GS to generate new samples.


\section*{Acknowledgements}
\begin{wrapfigure}{r}{0.28\linewidth}
    \vspace{-12pt}
    \hspace{5pt}
    \includegraphics[width=0.88\linewidth]{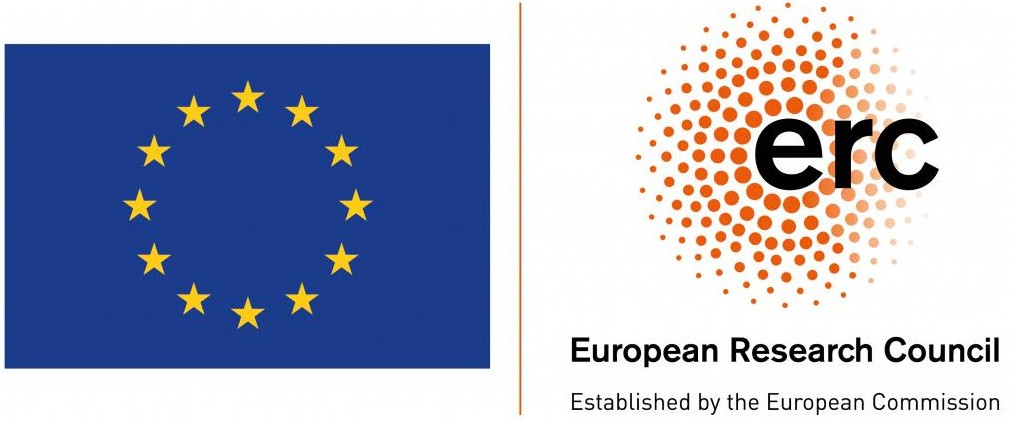}
\end{wrapfigure}
This project received funding from the European Research Council (ERC) 
under the European Union's Horizon 2020  research 
and innovation program (grant agreement No.~758824 \textemdash INFLUENCE). Mustafa Mert \c{C}elikok is partially funded by KAUTE Foundation - The Finnish Science Foundation for Economics and Technology.

\newpage
\bibliography{bibliography}

\bibliographystyle{apalike}
\section*{Checklist}


\begin{enumerate}

\item For all authors...
\begin{enumerate}
  \item Do the main claims made in the abstract and introduction accurately reflect the paper's contributions and scope?
    \answerYes{}
  \item Did you describe the limitations of your work?
    \answerYes{}
  \item Did you discuss any potential negative societal impacts of your work?
    \answerNA{}
  \item Have you read the ethics review guidelines and ensured that your paper conforms to them?
    \answerYes{}
\end{enumerate}

\item If you are including theoretical results...
\begin{enumerate}
  \item Did you state the full set of assumptions of all theoretical results?
    \answerYes{}
        \item Did you include complete proofs of all theoretical results?
    \answerYes{All proofs are included in Appendix \ref{ap:proofs}.}
\end{enumerate}

\item If you ran experiments...
\begin{enumerate}
  \item Did you include the code, data, and instructions needed to reproduce the main experimental results (either in the supplemental material or as a URL)?
    \answerYes{Code and instructions on how to reproduce all the experimental results are included in the supplemental material.}
  \item Did you specify all the training details (e.g., data splits, hyperparameters, how they were chosen)?
    \answerYes{See Section \ref{sec:setup} for a description of the experimental setup and Appendix \ref{ap:hyper} for a full list of the hyperparameter configurations used for the experiments along with an explanation of how these were chosen.}
        \item Did you report error bars (e.g., with respect to the random seed after running experiments multiple times)?
    \answerYes{The shaded areas in the learning curves and the black vertical lines on the bar plots in Section \ref{sec:results} and Appendix \ref{ap:results} show the standard error of the mean.}
        \item Did you include the total amount of compute and the type of resources used (e.g., type of GPUs, internal cluster, or cloud provider)?
    \answerYes{The computer requirements and specifications along with a table showing a breakdown of the total runtimes for all experiments are provided in Appendix \ref{ap:runtimes}.}
\end{enumerate}

\item If you are using existing assets (e.g., code, data, models) or curating/releasing new assets...
\begin{enumerate}
  \item If your work uses existing assets, did you cite the creators?
    \answerYes{}
  \item Did you mention the license of the assets?
    \answerYes{}
  \item Did you include any new assets either in the supplemental material or as a URL?
    \answerNo{}
  \item Did you discuss whether and how consent was obtained from people whose data you're using/curating?
    \answerNA{}
  \item Did you discuss whether the data you are using/curating contains personally identifiable information or offensive content?
    \answerNA{}
\end{enumerate}

\item If you used crowdsourcing or conducted research with human subjects...
\begin{enumerate}
  \item Did you include the full text of instructions given to participants and screenshots, if applicable?
    \answerNA{}
  \item Did you describe any potential participant risks, with links to Institutional Review Board (IRB) approvals, if applicable?
    \answerNA{}
  \item Did you include the estimated hourly wage paid to participants and the total amount spent on participant compensation?
    \answerNA{}
\end{enumerate}

\end{enumerate}


\newpage
\appendix

\section{Proofs}\label{ap:proofs}
\onetoone*
\begin{proof}
We will prove it by contradiction. Let us assume there is a single joint policy $\pi$ that induces two different influence distributions $I^1_i$ and $I^2_i$ on agent $i$. From the definition of influence (Section 4.1; \citealt{oliehoek2021sufficient}) we have 
\begin{equation}
    I^1_i(u_i^t|l_i^t) = \sum_{u_{i}^{t-1}, y_{i}^{t-1}, a_{-i}^{t-1}} P^1(u_i^t| x_{i}^{t-1}, u_{i}^{t-1} , y_{i}^{t-1}, a^{t-1}) P^1(u_{i}^{t-1}, y_{i}^{t-1}, a_{-i}^{t-1}|l_i^t)
\end{equation}
and
\begin{equation}
    I^2_i(u_i^t|l_i^t) = \sum_{u_{i}^{t-1}, y_{i}^{t-1}, a_{-i}^{t-1}} P^2(u_i^t| x_{i}^{t-1}, u_{i}^{t-1} , y_{i}^{t-1}, a^{t-1}) P^2(u_{i}^{t-1}, y_{i}^{t-1}, a_{-i}^{t-1}|l_i^t).
\end{equation}
First, we see that, because $\langle x_{i}^{t-1}, u_{i}^{t-1} , y_{i}^{t-1} \rangle$ fully determines the Markov state $s^{t-1}$, the first term in the summation can be computed from the environment's transition function, and thus
\begin{align}
\begin{split}
    P^1(u_i^t| x_{i}^{t-1}, u_{i}^{t-1} , y_{i}^{t-1}, a^{t-1}) =& P^2(u_i^t| x_{i}^{t-1}, u_{i}^{t-1} , y_{i}^{t-1}, a^{t-1}) \\ 
    =&\sum_{s^t} \mathds{1}(u^t, s^t) T(s^t| s^{t-1}, a^{t-1}),
\end{split}
\end{align}
where $\mathds{1}(u^t, s^t)$ is an indicator function that determines if the state $s^t$ is feasible in the context of $u^t$.

Further, we know that 
\begin{equation}
    P^1(u_{i}^{t-1}, y_{i}^{t-1}, a_{-i}^{t-1}|l_i^t) = \sum_{h_{-i}^{t-1}} \pi_{-i}(a_{-i}^{t-1}|h_{-i}^{t-1})P^1(u_{i}^{t-1}y_{i}^{t-1},h_{-i}^{t-1}|l_i^t)
\end{equation}
\begin{equation}
    P^2(u_{i}^{t-1}, y_{i}^{t-1}, a_{-i}^{t-1}|l^t) = \sum_{h_{-i}^{t-1}} \pi_{-i}(a_{-i}^{t-1}|h_{-i}^{t-1})P^2(u_{i}^{t-1}y_{i}^{t-1},h_{-i}^{t-1}|l_i^t)
\end{equation}
where $P^1(u_{i}^{t-1}y_{i}^{t-1},h_{-i}^{t-1}|l^t)$ and $P^2(u_{i}^{t-1}y_{i}^{t-1},h_{-i}^{t-1}|l^t)$ can be computed recursively as
\begin{align}
\begin{split}
    P^1(u_{i}^{t-1},y_{i}^{t-1},h_{-i}^{t-1}|l_i^t) &= \\ \sum_{h_{-i}^{t-2}, o_{-i}^{t-1}}&O(o_{-i}^{t-1}|x_i^{t-1}, u_i^{t-1}, y_i^{t-1})\pi_{-i}(a_{-i}^{t-2}|h_{-i}^{t-2}) P^1(u_{i}^{t-1},y_{i}^{t-1},h_{-i}^{t-2}|l_i^t),
    \label{eq:recursive1}
\end{split}    
\end{align}

and
\begin{align}
\begin{split}
    P^2(u_{i}^{t-1},y_{i}^{t-1},h_{-i}^{t-1}|l_i^t) &= \\ \sum_{h_{-i}^{t-2}, o_{-i}^{t-1}}&O(o_{-i}^{t-1}|x_i^{t-1}, u_i^{t-1}, y_i^{t-1})\pi_{-i}(a_{-i}^{t-2}|h_{-i}^{t-2}) P^2(u_{i}^{t-1},y_{i}^{t-1},h_{-i}^{t-2}|l_i^t),
    \label{eq:recursive2}
\end{split}    
\end{align}
with $h_{-i}^{t-1} = \langle h_{-i, t-2}, a_{-i, t-2}, o_{-i}^{t-1} \rangle$. 
Then, if we further unroll equations \eqref{eq:recursive1} and \eqref{eq:recursive2} up to timestep $0$, we see that all probability distributions in both cases are equivalent and we reach a contradiction. Hence,
\begin{equation}
I^1_i(u_i^t|l_i^t) = I^2_i(u_i^t|l_i^t)    
\end{equation}
\end{proof}

\manytoone*
\begin{proof}
From Proposition \ref{prop:onetoone} it follows that the space of joint influences $\Psi$ is at most as large as the space of joint policies $\Pi$, $|\Psi| \rlap{\kern.45em$|$}> |\Pi|$. Hence, we just need to show that in some cases $\Pi$ is strictly greater than $\Psi$, $|\Pi| > |\Psi|$.
\end{proof}
A clear example is that where each agent's local region $X_i$ is independent of the other agents' policies $\pi_{-i}$ \citep{Becker03AAMAS}. That is, the actions of other agents $a_{-i}$ have no effect on agent $i$'s local state transitions. From the definition of IALM (Definition \ref{def:IALM}) we know that, in our setting, $a_{-i}$ can only affect the local state transitions through $u_i$. Therefore, for the local transitions to be independent the following should hold
\begin{equation}
    P(u_i^t| x_{i}^{t-1}, u_{i}^{t-1} , y_{i}^{t-1}, a^{t-1}) = P(u_i^t| x_{i}^{t-1}, u_{i}^{t-1} , y_{i}^{t-1}, a_i^{t-1})
\end{equation}
The equation above reflects that only agent $i$ can affect $u_i^t$. Thus, in the event of local transition independence, we have that
\begin{equation}
   \forall i \in N:\exists ! I^*_i(u_i^t|l_i^t) \in \Psi_i : \forall \pi \in \Pi \left(I_i(u_i^t|l_i^t, \pi) = I^*_i(u_i^t|l_i^t) \right)
\end{equation}
That is, for any joint policy $\pi \in \Pi$ there is a unique influence distribution $I^*_i \in \Psi_i$ for every agent $i \in N$, and thus, in this particular case, $|\Pi| \gg |\Psi| = 1$.



To prove Lemma \ref{lemma:two} we will use the following lemma.
\begin{lemma}
Let $I_i^1(u_i^t|l_i^t)$ and $I_i^2(u_i^t|l_i^t)$ be two different influence distributions with $M_i^1$ and $M_i^2$ being the IALMs induced by each of them respectively. Moreover, let $P^1(h_i^{t+1}|h_i^t, a_i^t)$ and  $P^2(h_i^{t+1}|h_i^t, a_i^t)$ denote the resulting local AOH transitions for $M_i^1$ and $M_i^2$ respectively. The following inequality holds
\begin{equation}
    \sum_{x_i^{t+1}}\left|P^1(h_i^{t+1}|h_i^t, a_i^t) - P^2(h_i^{t+1}|h_i^t, a_i^t)\right| \leq \sum_{l_i^t,u_i^t} P(l_i^t|h_i^t)  \left|I^1(u_i^t| l_i^t) - I^2(u_i^t| l_i^t) \right| \quad \forall h_i^t, a_i^t
\end{equation}
\label{lemmaone}
\end{lemma}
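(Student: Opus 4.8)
The plan is to express the local AOH transition $P^k(h_i^{t+1}|h_i^t,a_i^t)$ for each model $M_i^k$ ($k=1,2$) by conditioning on the current local state $x_i^t$ (recoverable from $h_i^t$) and marginalizing over the influence source $u_i^t$, then isolate the only place where the two models differ, namely the influence distribution. First I would write $h_i^{t+1} = \langle h_i^t, a_i^t, o_i^{t+1}\rangle$ and note that the new observation $o_i^{t+1}$ depends only on $x_i^{t+1}$ through $\dot{O}_i$, which is shared by both models. Hence it suffices to bound the total variation between the induced distributions over $x_i^{t+1}$: expanding via Equation \eqref{eq:IALM} adapted to histories, for each model we get
\begin{equation}
P^k(x_i^{t+1}\mid h_i^t,a_i^t) = \sum_{l_i^t} P(l_i^t\mid h_i^t)\sum_{u_i^t}\dot{T}_i(x_i^{t+1}\mid x_i^t,u_i^t,a_i^t)\,I_i^k(u_i^t\mid l_i^t),
\end{equation}
where $P(l_i^t\mid h_i^t)$ is the same for both models because the local transition function $\dot{T}_i$, the local observation function $\dot{O}_i$ and the prior are identical (the models differ \emph{only} in $I_i$, and the belief over ALSHs given an AOH is determined by these shared components together with $\pi_i$, which is also fixed).

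Next I would take the difference of the two expressions, push the absolute value inside the sums over $l_i^t$ and $u_i^t$ using the triangle inequality, and sum over $x_i^{t+1}$. The key cancellation is $\sum_{x_i^{t+1}}\dot{T}_i(x_i^{t+1}\mid x_i^t,u_i^t,a_i^t) = 1$, since $\dot{T}_i$ is a probability distribution over next local states. This collapses the bound to
\begin{equation}
\sum_{x_i^{t+1}}\left|P^1(x_i^{t+1}\mid h_i^t,a_i^t) - P^2(x_i^{t+1}\mid h_i^t,a_i^t)\right| \leq \sum_{l_i^t}P(l_i^t\mid h_i^t)\sum_{u_i^t}\left|I_i^1(u_i^t\mid l_i^t) - I_i^2(u_i^t\mid l_i^t)\right|,
\end{equation}
which is exactly the claimed inequality once one observes that summing the observation-function step $\sum_{o_i^{t+1}}\dot{O}_i(o_i^{t+1}\mid x_i^{t+1}) = 1$ turns the left-hand side over $h_i^{t+1}$ into the left-hand side over $x_i^{t+1}$ without loss.

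The main obstacle I anticipate is the bookkeeping around the claim that $P(l_i^t\mid h_i^t)$ is genuinely model-independent. This requires carefully arguing that although the two IALMs have different influence distributions, the conditional law of the ALSH $l_i^t$ given the AOH $h_i^t$ is the same in both — which follows because the ALSH is a function of the realized local state/action sequence, and the law over such sequences (given $h_i^t$ and the fixed $\pi_i$) is governed only by the shared $\dot{T}_i$, $\dot{O}_i$, and the initial belief, not by $I_i$; the influence distribution only enters when predicting the \emph{next} step. Making this precise may require a small induction on $t$, or simply a remark that $h_i^t$ already encodes all the information $l_i^t$ would add (so $P(l_i^t \mid h_i^t)$ is even degenerate). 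Everything else is the triangle inequality and the two normalization identities $\sum \dot T_i = 1$, $\sum \dot O_i = 1$.
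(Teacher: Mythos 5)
Your proposal is correct and takes essentially the same route as the paper's proof: expand $P^k(h_i^{t+1}\mid h_i^t,a_i^t)$ through the shared $\dot{O}_i$, $\dot{T}_i$ and the belief $P(l_i^t\mid h_i^t)$ so that the difference isolates $I_i^1-I_i^2$, then apply the triangle inequality together with the normalizations $\sum_{x_i^{t+1}}\dot{T}_i=1$ and $\sum_{o_i^{t+1}}\dot{O}_i=1$. The bookkeeping you flag about $P(l_i^t\mid h_i^t)$ is treated the same way in the paper---the lemma's right-hand side uses a single, unsuperscripted $P(l_i^t\mid h_i^t)$, i.e.\ the belief over ALSHs is taken as a common weighting for both models, so no extra argument is required (though your stronger claim that this law never depends on $I_i$ would not hold in general, since local-state trajectories are generated through the influence-augmented transition).
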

\begin{proof}
\begin{align}
\begin{split}
&\sum_{h_i^{t+1}}\left|P^1(h_i^{t+1}|h_i^t, a_i^t) - P^2(h_i^{t+1}|h_i^t, a_i^t)\right|\\ 
=& \sum_{o_i^{t+1}} \Big| \sum_{x_i^{t+1}} O_i(o_i^{t+1}|x_i^{t+1})\sum_{u_i^t}\dot{T}_i(x_i^{t+1}| x_i^t, u_i^t, a_i^t)\sum_{l_i^t} I^1(u_i^t|l_i^t)P(l_i^t|h_i^t) \\
&- \sum_{x_i^{t+1}} O_i(o_i^{t+1}|x_i^{t+1})\sum_{u_i^t}\dot{T}_i(x_i^{t+1}| x_i^t, u_i^t, a_i^t)\sum_{l_i^t} I^2(u_i^t|l_i^t)P(l_i^t|h_i^t)\Big|\\
=&  \sum_{o_i^{t+1}}\Big| \sum_{x_i^{t+1}}O_i(o_i^{t+1}|x_i^{t+1})\sum_{u_i^t}\dot{T}_i(x_i^{t+1}| x_i^t, u_i^t, a_i^t) \sum_{l_i^t}P(l_i^t|h_i^t) \big[ I^1(u_i^t|l_i^t) - I^2(u_i^t|l_i^t)\big]\Big|\\
=& \Big|\sum_{l_i^t,u_i^t} P(l_i^t|h_i^t) \big[ I^1(u_i^t|l_i^t) - I^2(u_i^t|l_i^t)\big]\Big|\\
\leq& \sum_{l_i^t,u_i^t} P(l_i^t|h_i^t)  \left|I^1(u_i^t| l_i^t) - I^2(u_i^t| l_i^t) \right|
\end{split}
\end{align}
\end{proof}
\propositiontwo*
\begin{proof}

This is a special case of the simulation lemma \citep{kearns2002near}. We have that the set of local states and actions is the same for both IALMs. Moreover, the reward function is also the same $R^1(x_t, a_t) = R^2(x_t, a_t)$. 


\begin{align}
\begin{split}
    \Big|Q^{\pi_i}_{M_i^1}(h_i^t, a_i^t) - Q^{\pi_i}_{M_i^2}(h_i^t, a_i^t)\Big| =& \Bigg|\sum_{x_i^t} P(x_i^t|h_i^t)  R(x_i^t,  a_i^t)    \\
    +\sum_{h_i^{t+1}, a_i^{t+1}}P^1(h_i^{t+1}|h_i^t, a_i^t)&\pi_i(a_i^{t+1}|h_i^{t+1})  Q^{\pi_i}_{M_i^1}(h_i^{t+1}, a_i^{t+1}) - \sum_{x_i^t} P(x_i^t|h_i^t)  R(x_i^t,  a_i^t)\\
    -\sum_{h_i^{t+1}, a_i^{t+1}}P^2(h_i^{t+1}|h_i^t, a_i^t)&\pi_i(a_i^{t+1}|h_i^{t+1})  Q^{\pi_i}_{M_i^2}(h_i^{t+1}, a_i^{t+1}) \Bigg|, \\
\end{split}
\end{align}
where $P^1(h_i^{t+1}|h_i^t, a_i^t)$ and $P^2(h_i^{t+1}|h_i^t, a_i^t)$ are the AOH transitions induced by $I^1$ and $I^2$ respectively.
\begin{align}
\begin{split}
     \Big|Q^{\pi_i}_{M_i^1}(h_i^t, a_i^t) - Q^{\pi_i}_{M_i^2}(h_i^t, a_i^t)\Big|=& \Bigg|\sum_{h_i^{t+1},a_i^{t+1}}\pi_i(a_i^{t+1}|h_i^{t+1})\Big[ \\
     P^1(h_i^{t+1}|h_i^t, a_i^t) &Q^{\pi_i}_{M_i^1}(h_i^{t+1}, a_i^{t+1}) -  P^2(h_i^{t+1}|h_i^t, a_i^t)Q^{\pi_i}_{M_i^2}(h_i^{t+1}, a_i^{t+1})\Big]\Bigg| \\
    =& \Bigg| \sum_{h_i^{t+1},a_i^{t+1}}\pi_i(a_i^{t+1}|h_i^{t+1})\Big[\\
     P^1(h_i^{t+1}|h_i^t, a_i^t) &Q^{\pi_i}_{M_i^1}(h_i^{t+1}, a_i^{t+1}) -  P^2(h_i^{t+1}|h_i^t,a_i^t) Q^{\pi_i}_{M_i^1}(h_i^{t+1}, a_i^{t+1}) \\ 
     + P^2(h_i^{t+1}|h_i^t, a_i^t)&Q^{\pi_i}_{M_i^1}(h_i^{t+1}, a_i^{t+1}) -  P^2(h_i^{t+1}|h_i^t, a_i^t)Q^{\pi_i}_{M_i^2}(h_i^{t+1}, a_i^{t+1})\Big] \Bigg|\\
     \leq& 
      \Bigg|\bar{R}(H - t)\sum_{h_i^{t+1}}\big(P^1(h_i^{t+1}|h_i^t, a_i^t) -  P^2(h_i^{t+1}|h_i^t,a_i^t)\big) \\ +
     \sum_{h_i^{t+1},a_i^{t+1}}\pi_i(a_i^{t+1}&|h_i^{t+1})P^2(h_i^{t+1}|h_i^t, a_i^t)\Big[Q^{\pi_i}_{M_i^1}(h_i^{t+1}, a_i^{t+1}) -  Q^{\pi_i}_{M_i^2}(h_i^{t+1}, a_i^{t+1})\Big]\Bigg|
\end{split}
\end{align}
since $Q^{\pi_i}_{M_i^1}(h_i^{t+1}) \leq \bar{R}(H - t)$. Then, from Lemma \ref{lemmaone} we know that
\begin{equation}
    \sum_{h_i^{t+1}}\big(P^1(h_i^{t+1}|h_i^t, a_i^t) -  P^2(h_i^{t+1}|h_i^t,a_i^t)\big) \leq \sum_{l_i^t,u_i^t} P(l_i^t|h_i^t)  \left|I^1(u_i^t| l_i^t) - I^2(u_i^t| l_i^t) \right| \leq \xi \quad \forall h_i^t, a_i^t.
\end{equation}
Hence, 
\begin{equation}
    \big|Q^{\pi_i}_{M_i^1}(h_i^t, a_i^t) - Q^{\pi_i}_{M_i^2}(h_i^t, a_i^t)\big| \leq  \sum_{k=t}^H\bar{R}(H - k)\xi = \bar{R}\frac{(H-t)(H-t+1)}{2}\xi.
\end{equation}
\end{proof}
\theoremtwo*
\begin{proof}
We will prove it by contradiction. Let us assume there is a policy $\pi^*$ that is optimal for $M_i^1$ but not for $M_i^2$. 
This implies that, for some $h^t_i$, there is at least one action  $\hat{a}^t_i \neq \bar{a}_i^t$ for which
\begin{equation}
    Q^{\pi^*}_{M_i^2}(h_i^t, \bar{a}_i^t) < Q^{\pi^*}_{M_i^2}(h_i^t, \hat{a}_i^t)
\end{equation}
Then, because the maximum gap between $Q_{M_i^1}$ and $Q_{M_i^2}$ is $\Delta$,
\begin{equation}
      Q^{\pi^*}_{M_i^1}(h_i^t, \bar{a}_i^t) - \Delta \leq Q^{\pi^*}_{M_i^2}(h_i^t, \bar{a}_i^t) < Q^{\pi^*}_{M_i^2}(h_i^t, \hat{a}_i^t) \leq Q^{\pi^*}_{M_i^1}(h_i^t, \hat{a}_i^t) + \Delta.
\end{equation}
Therefore, we have
\begin{equation}
      Q^{\pi^*}_{M_i^1}(h_i^t, \bar{a}_i^t) - Q^{\pi^*}_{M_i^1}(h_i^t, \hat{a}_i^t) < 2\Delta,
\end{equation}
which contradicts the statement
\begin{equation}
    Q^{\pi^*}_{M_i^1}(h_i^t, \bar{a}_i^t) - Q^{\pi^*}_{M_i^1}(h_i^{t}, {a_i^t}) > 2\Delta \quad \forall h_i^t, a_i^t
\end{equation}
\end{proof}
\section{Further Related Work}\label{ap:related_work}


There is a sizeable body of literature that concentrates on the non-stationarity issues arising from having multiple agents learning simultaneously in the same environment \citep{laurent2011world, hernandez2017survey}. Although oftentimes the problem can be simply ignored with virtually no consequences for the agents’ performance \citep{tan1993multi}, in general, disregarding changes in the other agents' policies, and assuming individual Q-values to be stationary, can have a catastrophic effect on convergence \citep{Claus98AAAI}.  


The problem of non-stationarity becomes even more severe in the Dec-POMDP setting \citep{Oliehoek16Book} since policy changes may not be immediately evident from each agent's AOH. To compensate for this
\citet{raileanu2018modeling} and \citet{rabinowitz2018machine} explicitly train models that predict the other agents' goals and behaviors. In contrast, \citet{Foerster18AAMAS_LOLA} add an extra term to the learning objective that is meant to predict the other agents' parameter updates. This approach is empirically shown to encourage cooperation in general-sum games. In order to better approximate the value function, several works have studied the use of additional information during training to inform each individual agent of changes in the other agents' policies, leading to the ubiquitous centralized training decentralized execution (CTDE) paradigm. The works by \citet{Lowe17NIPS} and \citet{Foerster18AAAI} exploit this by training a single centralized critic that takes as input the true state and joint action of all the agents. This critic is then used to update the policies of all agents following the actor-critic policy gradient update \citep{konda1999actor}. Even though the use of additional information to augment the critic may help reduce bias in the value estimates, the idea lacks any theoretical guarantees and has been shown to produce the same policy gradient in expectation as those produced by multiple independent critics \citep{lyu2021contrasting}. Moreover, according to \citeauthor{lyu2021contrasting}, naively augmenting the critic with all other agents' actions and observations can heavily increase the variance of the policy gradients. Both results, however, assume that the critics have converged to the true on-policy value estimates. The authors do admit that, in practice, critics are often used even when they have not yet converged. In such situations, centralized critics might provide more stable policy updates since they are better equipped to follow the true non-stationary Q values. Following a similar perspective, the concurrent work by \citet{spooner2021factored} tries to reduce variance by using a per-agent baseline function that removes from the policy gradient the contributions to the joint value estimates of those agents that are conditionally independent, thus effectively providing the agent with more stable updates. The works by \citet{de2020independent} and \citet{yu2021surprising} show that the vanilla PPO algorithm \citep{schulman2017proximal} works already quite well on several multi-agent tasks. \citeauthor{yu2021surprising} attribute the positive empirical results to the clipping parameter $\epsilon$, which prevents individual policies from changing drastically, and in turn, reduces the problem of non-stationarity. \citet{li2021dealing} further analyze this idea and propose a method to estimate the joint policy divergence, which is then used as a constraint in the optimization objective. 



\section{Algorithms}\label{ap:algorithms}
The two algorithms below describe how to generate the datasets  $\{D_i\}_{i \in N}$ with the GS (Algorithm \ref{alg:collect}) and how to simulate trajectories with each of the IALS (Algorithm \ref{alg:sample}).

\begin{algorithm}[h]
\caption{Collect datasets $\{D_i\}_{i \in N}$ with GS}
\begin{algorithmic}[h]
\State \textbf{Input:} $T$, $\{\dot{O}_i\}_{i \in N}$, $\pi^0 = \{\pi^0_i\}_{i \in N}$
\Comment{Global simulator, observation functions, and joint policy}
\For{$n \in \langle 0, ..., N / T \rangle$}
\State  $s^0 \gets$ reset
\Comment{Reset initial state}
\State $\{x^0_i\}_{i \in N} \gets s^0$ 
\Comment{Extract local states from global state}
\State $\{l^0_i \gets x^0_i\}_{i \in N}$
\Comment{Initialize each agent's ALSH with initial local state}
\State $\{o_i^{0} \sim O_i(\cdot \mid x^{0})\}_{i \in N}$
\Comment{Sample each agent's observation from $O_i$}
\State $\{h^0_i \gets o_i^0\}_{i \in N}$
\Comment{Initialize each agent's AOH with initial observation}
\For{$t \in \langle0, ..., T\rangle$}
\State $\{u^0_i\}_{i \in N} \gets s^0$
\Comment{Extract each agent's influence sources from global state}
\State $\{D_i \gets (l^t_i, u^t_i)\}_{i \in N}$ 
\Comment{Append ALSH-influence-source pair to the datasets}
\State $\{a_i^t \sim \pi(\cdot \mid h_i^t)\}_{i \in N}$ 
 \Comment{Sample each agent's action from $\pi_i$}
\State $s^{t+1} \sim T(\cdot \mid s^t, a^t = \{a_i^t\}_{i \in N})$  
\Comment{Sample next state from GS}
\State $\{x^{t+1}_i\}_{i \in N} \gets s^{t+1}$
\Comment{Extract local states from global state}
\State $\{l^{t+1}_i \gets \langle a_i^t, x_i^{t+1} \rangle\}_{i \in N}$ 
\Comment{Append action-local-state pairs to each agent's ALSH}
\State $\{o_i^{t+1} \sim \dot{O}_i(\cdot \mid x^{t+1})\}_{i \in N}$
\Comment{Sample each agent's observation from $\dot{O}_i$}
\State $\{h^{t+1}_i \gets \langle a_i^t, o_i^{t+1}\rangle \}_{i \in N}$
 \Comment{Append actions-observation pairs to each agent's AOH}
\EndFor
\EndFor
\end{algorithmic}
\label{alg:collect}
\end{algorithm}
\begin{algorithm}
\caption{Simulate agent $i$'s trajectory with IALS}
\begin{algorithmic}[1]
\State \textbf{Input:} $\dot{T}_i,\dot{R}_i, \dot{O}_i, \pi_i,\hat{I}_{\theta_i}$
\Comment{local simulator, local reward and observation functions, policy, AIP}
  \State $x^0_i \gets$ reset
  \Comment{Reset initial state}
  \State $o_i^0 \sim \dot{O}_i(\cdot|x_i^0)$  
  \Comment{Sample observation from $\dot{O}_i$}
  \State $h^0_i \gets o^0_i$
  \Comment{Initialize AOH with initial observation}
  \For{$t \in \langle0, ..., T\rangle$}
  \State $a_i^t \sim \pi(\cdot \mid h_i^t)$ 
  \Comment{Sample action}
  \State $\dot{R}_i(x_i^t, a_i^t)$
  \Comment{Compute reward}
  \State  $u_i^t \sim \hat{I}_{\theta_i}(\cdot \mid l_i^t) $ 
  \Comment{Sample influence sources from AIP}
  \State $x_i^{t+1} \sim \dot{T}(\cdot \mid x_i^t, a_i^t, u_i^t)$ 
  \Comment{Sample next local state from LS}
  \State $l_i^{t+1} \gets \langle a_i^t, x_i^{t+1} \rangle $ 
  \Comment{Append action-local-state pair to ALSH}
  \State $o_i^{t+1} \sim \dot{O}_i(\cdot \mid x_i^{t+1})$  
  \Comment{Sample observation from $O$}
  \State $h_i^{t+1} \gets \langle a_i^t, o_i^{t+1} \rangle $ 
  \Comment{Append action-observation pair to AOH}
  \EndFor
\end{algorithmic}
\label{alg:sample}
\end{algorithm}

\section{Results}\label{ap:results}

\subsection{DIALS vs GS}
\begin{figure}[h!]
\vspace{-10pt}
     \centering
     \begin{subfigure}{0.49\textwidth}
         \centering
         \includegraphics[width=\textwidth]{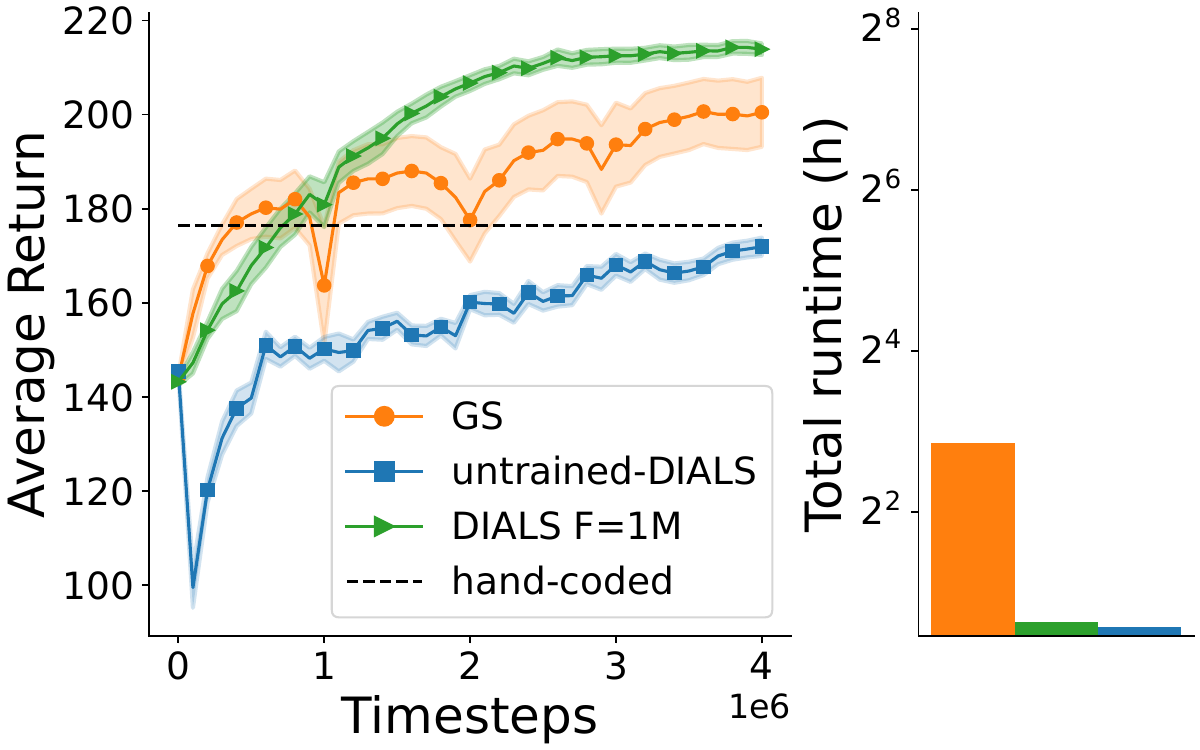}
         \caption{traffic $4$ agents}
         \label{fig:traffic_4agents}
     \end{subfigure}
     \hfill
     \begin{subfigure}{0.49\textwidth}
         \centering
         \includegraphics[width=\textwidth]{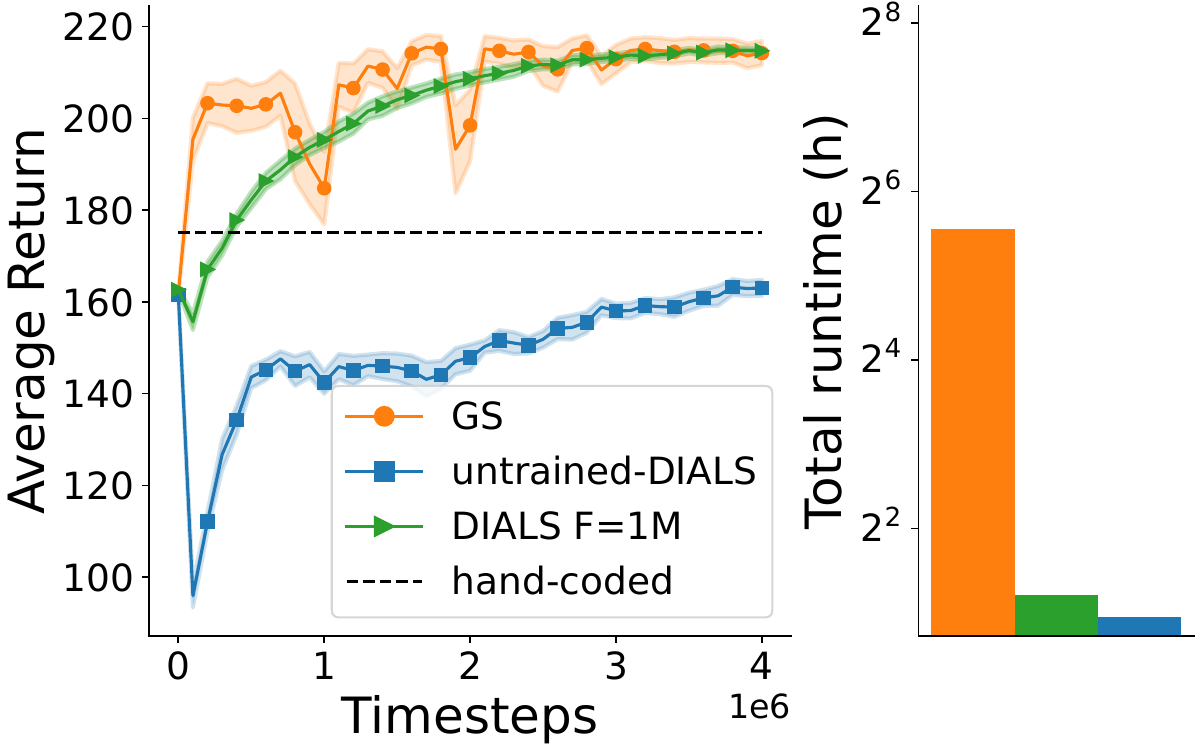}
         \caption{traffic $25$ agents}
     \end{subfigure}
     \hfill
     \begin{subfigure}{0.49\textwidth}
         \centering
         \includegraphics[width=\textwidth]{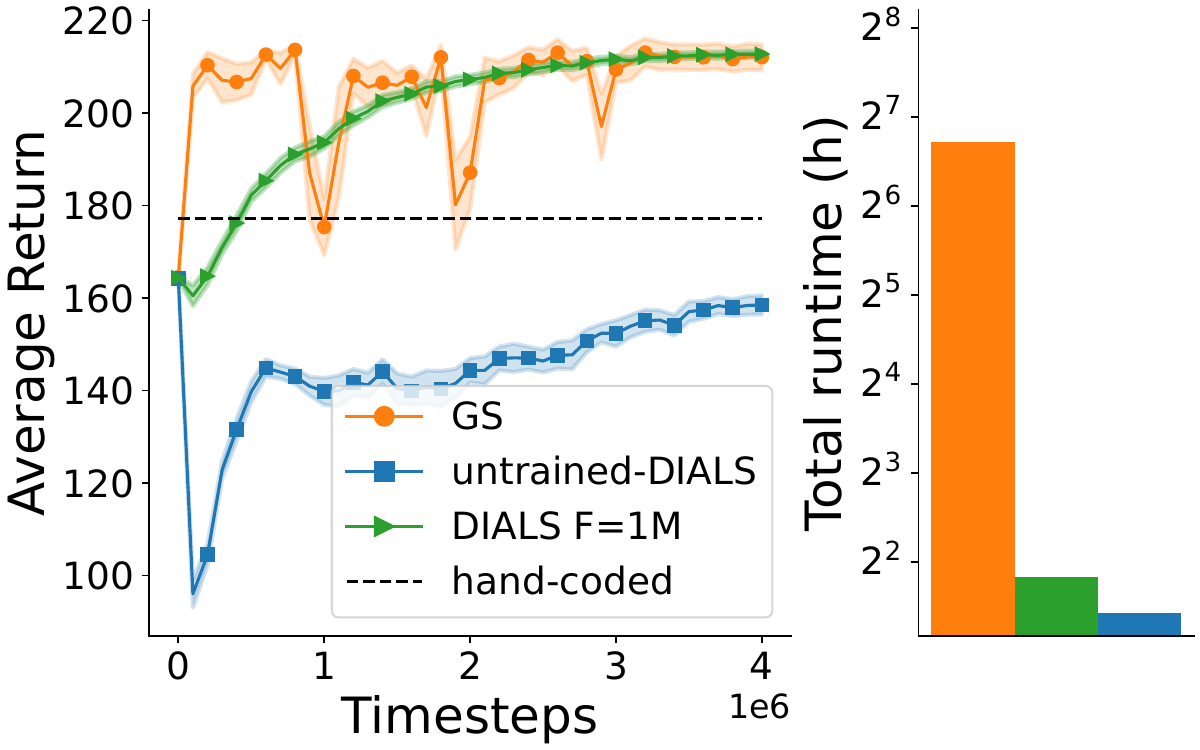}
         \caption{traffic $49$ agents}
     \end{subfigure}
     \hfill
     \begin{subfigure}{0.49\textwidth}
         \centering
         \includegraphics[width=\textwidth]{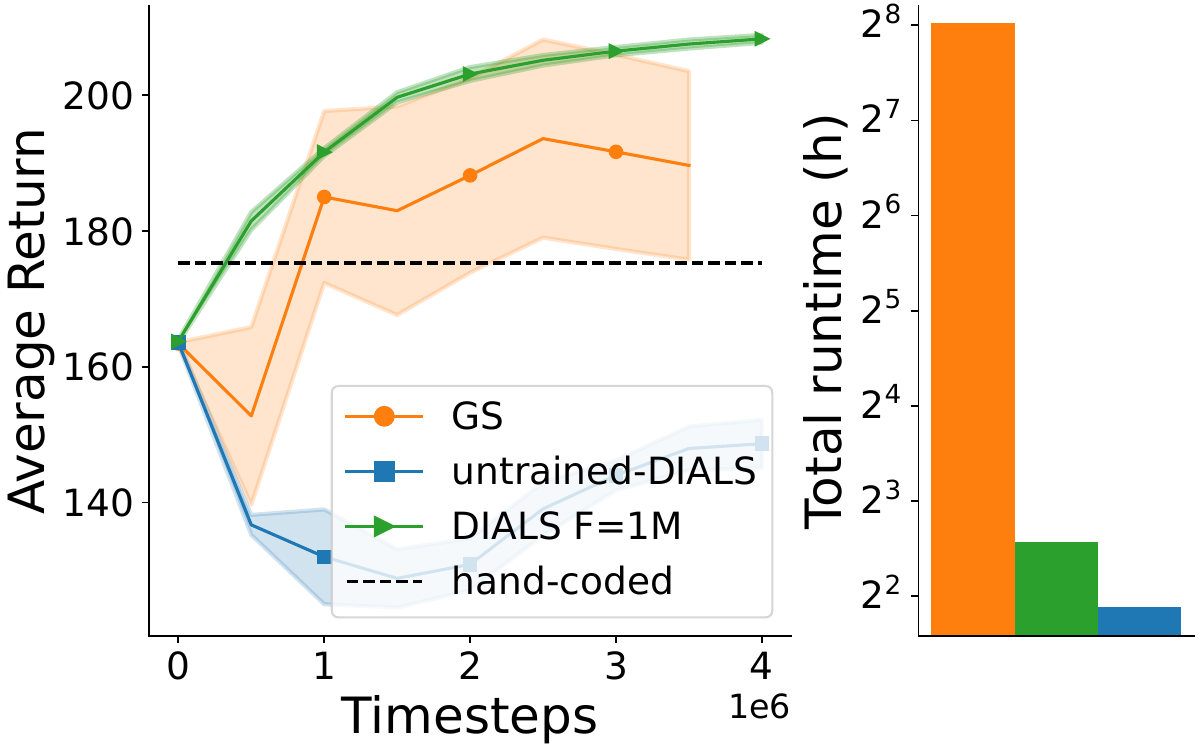}
         \caption{traffic $100$ agents}
         \label{fig:traffic_100agents}
     \end{subfigure}
     \caption{\textbf{Left (a), (b), (c), and (d):} Average return as a function of the number of timesteps with GS, DIALS $F=1$M, and untrained-DIALS on the traffic environment. \textbf{Right (a), (b), (c), and (d):} Total runtime of training for 4M timesteps, $y$-axis is in $\log_2$ scale.}
    \label{fig:traffic_appendix}
\end{figure}
\begin{figure}[h!]
     \centering
     \begin{subfigure}{0.49\textwidth}
         \centering
         \includegraphics[width=\textwidth]{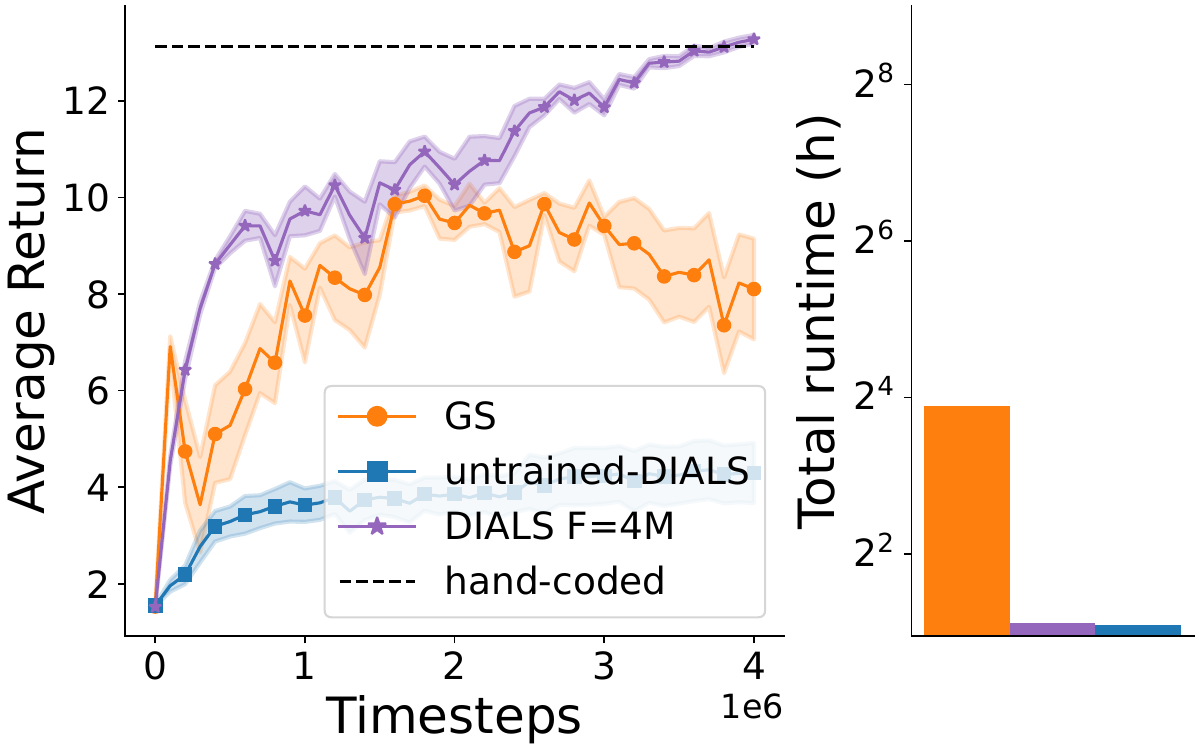}
         \caption{Warehouse $4$ agents}
     \end{subfigure}
     \hfill
     \begin{subfigure}{0.49\textwidth}
         \centering
         \includegraphics[width=\textwidth]{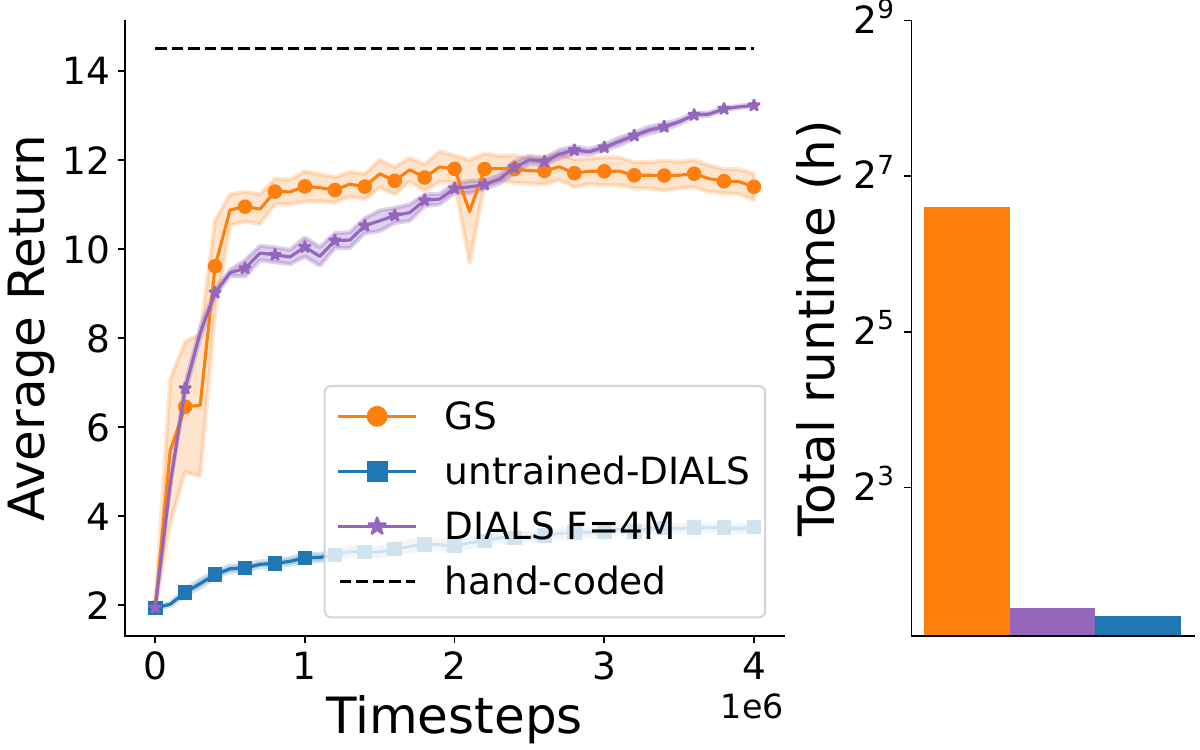}
         \caption{Warehouse $25$ agents}
     \end{subfigure}
     \hfill
     \begin{subfigure}{0.49\textwidth}
         \centering
         \includegraphics[width=\textwidth]{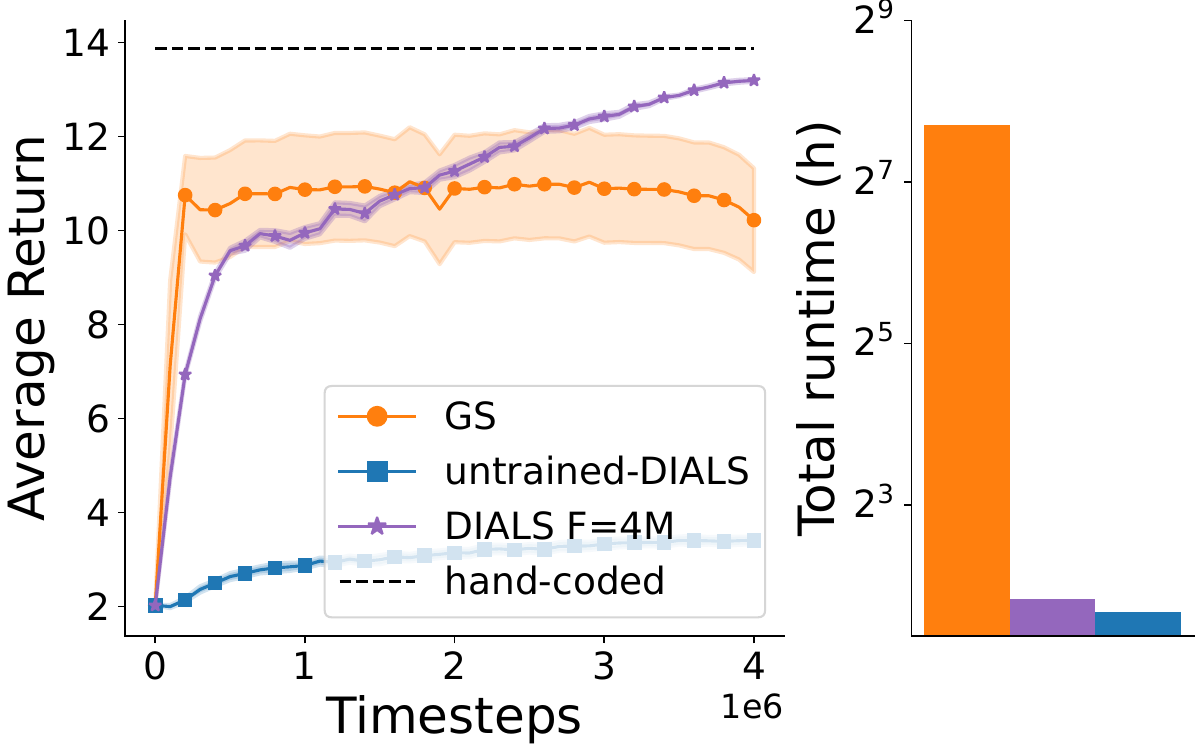}
         \caption{Warehouse $49$ agents}
     \end{subfigure}
     \hfill
     \begin{subfigure}{0.49\textwidth}
         \centering
         \includegraphics[width=\textwidth]{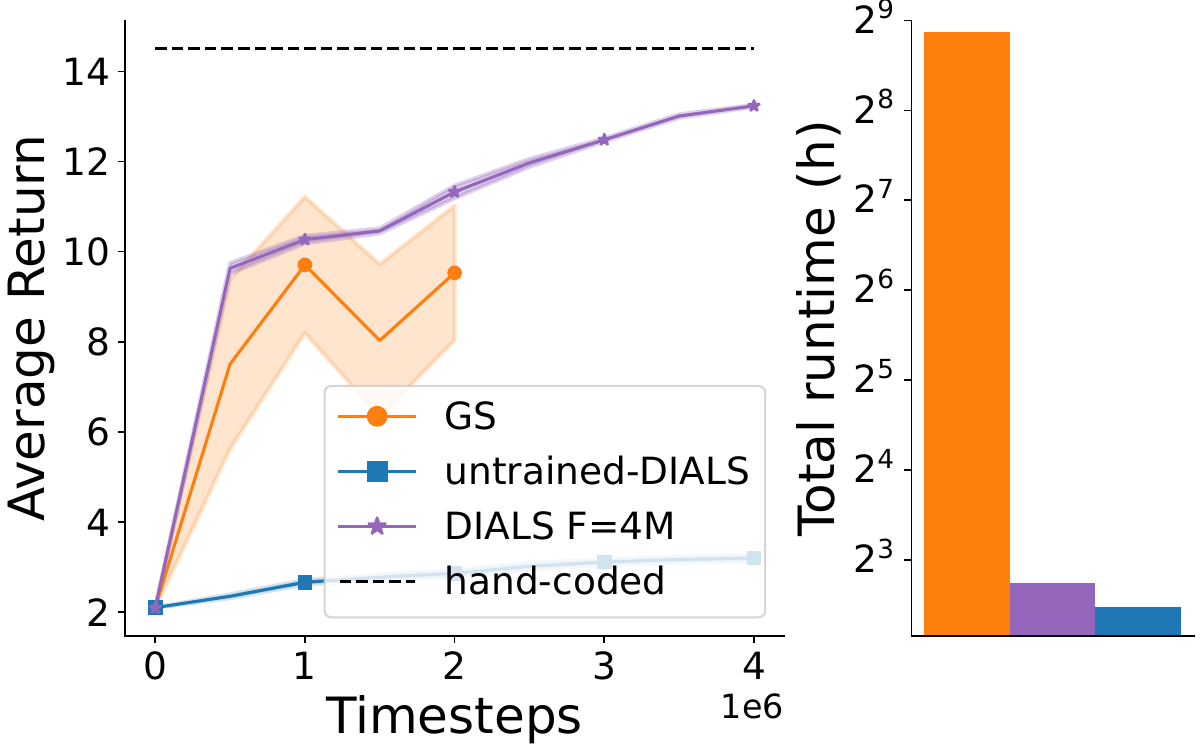}
         \caption{Warehouse $100$ agents}
         \label{fig:warehouse_100agents}
     \end{subfigure}
     \caption{\textbf{Left (a), (b), (c), and (d):} Average return as a function of the number of timesteps with GS, DIALS $F=1$M, and untrained-DIALS on the warehouse environment. \textbf{Right (a), (b), (c), and (d):} Total runtime of training for 4M timesteps, $y$-axis is in $\log_2$ scale.}
     \label{fig:warehouse_appendix}
\end{figure}

The plots in Figures \ref{fig:traffic_appendix} and \ref{fig:warehouse_appendix} show the learning curves of agents trained with the GS, DIALS, and untrained-DIALS on the 4 variants of the traffic and warehouse environments (4, 25, 49, and 100 agents). The bar plots show the total runtime of training for 4M timesteps with the three simulators. Shaded areas indicate the standard error of the mean. 

The orange curves in Figures  \ref{fig:traffic_100agents} and \ref{fig:warehouse_100agents} stop at 3.5M and 2M timesteps, respectively. This is because the maximum execution time allowed by our computer cluster is 1 week, and training 100 agents with the GS takes longer. A breakdown of the runtimes for the three simulators is provided in Appendix \ref{ap:runtimes}. Note that the runtime measurements were made on the only machine in our computer cluster with more than 100 CPUs. This is so that it would fit DIALS when training on the 100-agent variants. However, the experiments that required less than 100 CPUs were ran on different machines with different CPUs.

The bar plots indicate that DIALS is computationally more efficient and scales much better than GS. Note that the $y$ axis is in $\log_2$ scale. Moreover, agents trained with DIALS seem to converge steadily towards similar high-performing policies in both environments, while agents trained with the GS suffer frequent performance drops and often get stuck in local minima. This is evidenced by the oscillations in the orange curves, the poor mean episodic reward, and large standard errors compared to the green (traffic) and purple (warehouse) curves. The plots also reveal that estimating the influence distributions correctly is important, as indicated by the large gap between DIALS and untrained-DIALS in both environments.  

It is worth noting that the gap between GS and DIALS is larger in the warehouse (Figure \ref{fig:warehouse_appendix}) than in the traffic environment (Figure \ref{fig:traffic_appendix}). We posit that this is because, in the warehouse environment, agents are more strongly coupled. To see this imagine that, by random chance during training, a robot starts favoring items from one shelf over the three others. The robot's neighbors might exploit this and start collecting items from the unattended shelves. However, as soon as this first robot changes its policy and starts collecting items more evenly from all four shelves, the neighbor robots will experience a sudden drop in the value of their policies, which can have catastrophic effects on the learning dynamics. With the DIALS, however, agents are trained on separate simulators and only become aware of changes in the joint policy when the AIPs are retrained. This prevents them from constantly co-adapting to one another. This is in line with our discussion in Section \ref{sec:mitigating}. 

\subsection{AIPs training frequency}

The two plots on the left of Figures \ref{fig:traffic_DIALS_appendix} and \ref{fig:warehouse_DIALS_appendix} show a comparison of the agents' average return as a function of runtime for different values of the AIPs training frequency parameter $F$ ($100$K, $500$K, $1$M, and $4$M timesteps). For ease of visualization, since DIALS  $F=500$K, $F=1$M, and $F=4$M take shorter to finish than DIALS $F=100$K, the red, green, and purple curves are extended by dotted horizontal lines. Due to computational limitations, we ran these experiments only on the 4, 25, and 49-agent variants of the two environments. We then chose the best-performing values for $F$ ($F=1$M for traffic and $F=4$M for warehouse) and used those to run DIALS on the environments with 100 agents.

In the traffic domain, the gap between the green and the purple curve (Figure \ref{fig:traffic_DIALS_appendix}) suggests that it is important to retrain the AIPs at least every $1$M timesteps, such that agents become aware of changes in the other agents' policies. This is consistent on all the three variants (Figures \ref{fig:DIALS_traffic_4agents}, \ref{fig:DIALS_traffic_25agents}, and \ref{fig:DIALS_traffic_49agents}). In contrast, in the warehouse domain (Figure \ref{fig:warehouse_DIALS_appendix}), we see that training the AIPs only once at the beginning (DIALS $F=4$M) is sufficient (Figures \ref{fig:DIALS_warehouse_4agents}, \ref{fig:DIALS_warehouse_25agents}, and \ref{fig:DIALS_warehouse_49agents}). In fact, as indicated by the gap between the brown and the rest of the curves, updating the AIPs too frequently (DIALS $F=100$K), aside from increasing the runtimes, seems detrimental to the agents' performance. This is consistent with our hypothesis in Section \ref{sec:mitigating}: ``by not updating the AIPs too frequently, we get a biased but otherwise more consistent learning signal that the agents can rely on to improve their policies.''

The plots on the right of Figures \ref{fig:traffic_DIALS_appendix} and \ref{fig:warehouse_DIALS_appendix} show the average cross-entropy (CE) loss of the AIPs  evaluated on trajectories sampled from the GS. As explained in Section \ref{sec:DIALS} since all agents learn simultaneously, the influence distributions $\{I(u_i^t |l_i^t)\}_{i \in N}$ are non-stationary. For this reason, we see that the CE loss changes as the policies of the other agents are updated. We can also see how the CE loss decreases when the AIPs are retrained, which happens more or less frequently depending on the hyperparameter $F$. Note that the CE not only measures the distance between the two probability distributions but also the absolute entropy. In the warehouse domain (Figure \ref{fig:warehouse_DIALS_appendix}), the neighbor robots' locations become more predictable (lower entropy) as their policies improve. This explains why the CE loss decreases even though the AIPs are not updated. Also note that, in the warehouse environment (Figure \ref{fig:warehouse_DIALS_appendix}), even though by the end of training DIALS $F=4$M is highly inaccurate, as evidenced by the gap between the purple and the other curves, it is still good enough to train policies that match the performance of those trained with DIALS $F=500$K and $F=1$M. This is in line with our results in Section \ref{sec:DIALS}: ``Multiple influence distributions may induce the same optimal policy.''
\begin{figure}
     \centering
     \begin{subfigure}{0.7\textwidth}
         \centering
         \includegraphics[width=\textwidth]{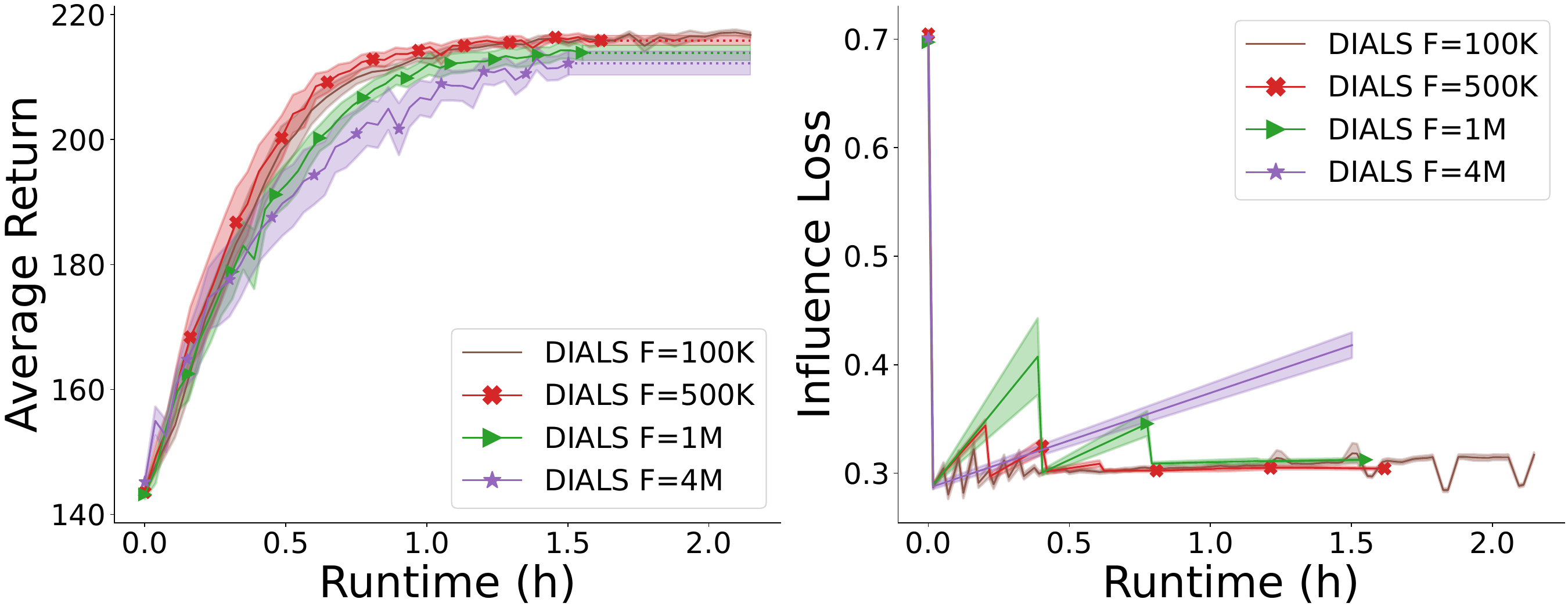}
         \caption{Traffic $4$ agents}
         \label{fig:DIALS_traffic_4agents}
     \end{subfigure}
     \hfill
     \begin{subfigure}{0.7\textwidth}
         \centering
         \includegraphics[width=\textwidth]{figures/DIALS_comparison_5x5.pdf}
         \caption{Traffic $25$ agents}
         \label{fig:DIALS_traffic_25agents}
     \end{subfigure}
     \hfill
     \begin{subfigure}{0.7\textwidth}
         \centering
         \includegraphics[width=\textwidth]{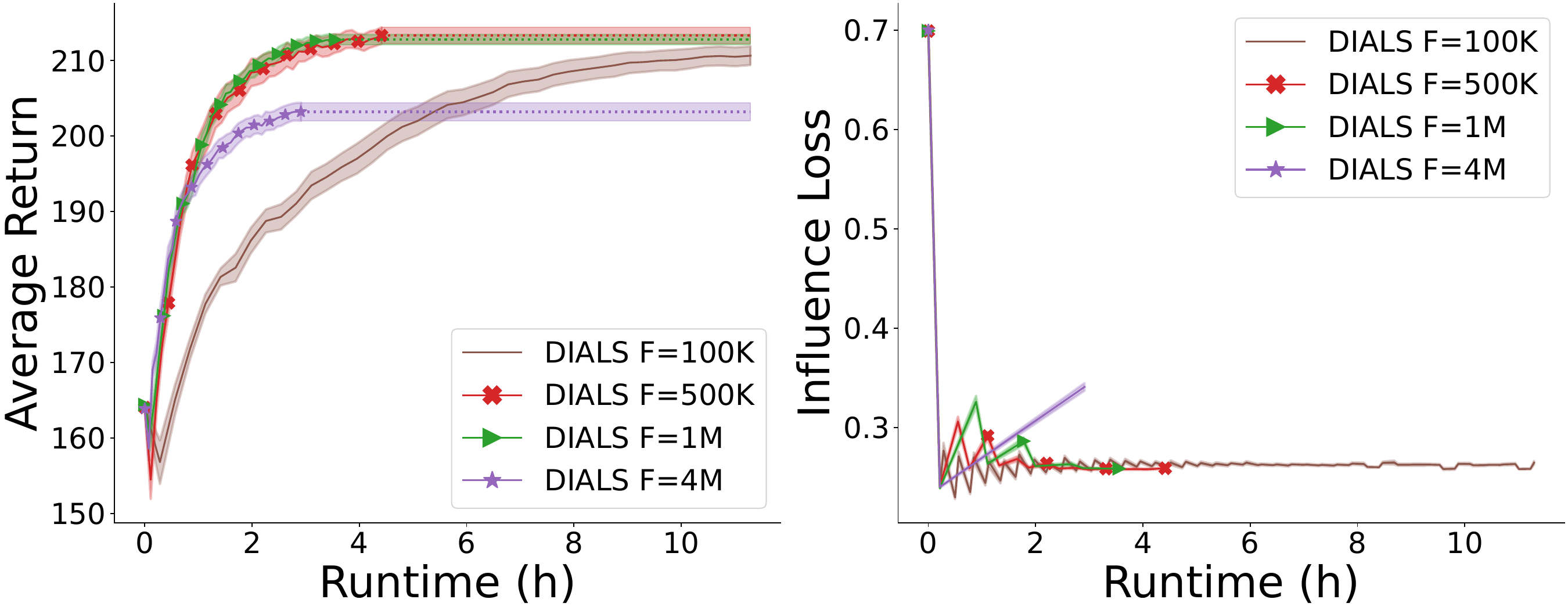}
         \caption{Traffic $49$ agents}
         \label{fig:DIALS_traffic_49agents}
     \end{subfigure}
     \caption{\textbf{Left (a), (b), and (c):} Learning curves for different values of $F$ on the 4, 25, and 49 agent versions of the traffic environment. \textbf{Right (a), (b), and (c):} CE loss of the AIPs as a function of runtime.}
     \label{fig:traffic_DIALS_appendix}
\end{figure}

\begin{figure}
     \centering
     \begin{subfigure}{0.7\textwidth}
         \centering
         \includegraphics[width=\textwidth]{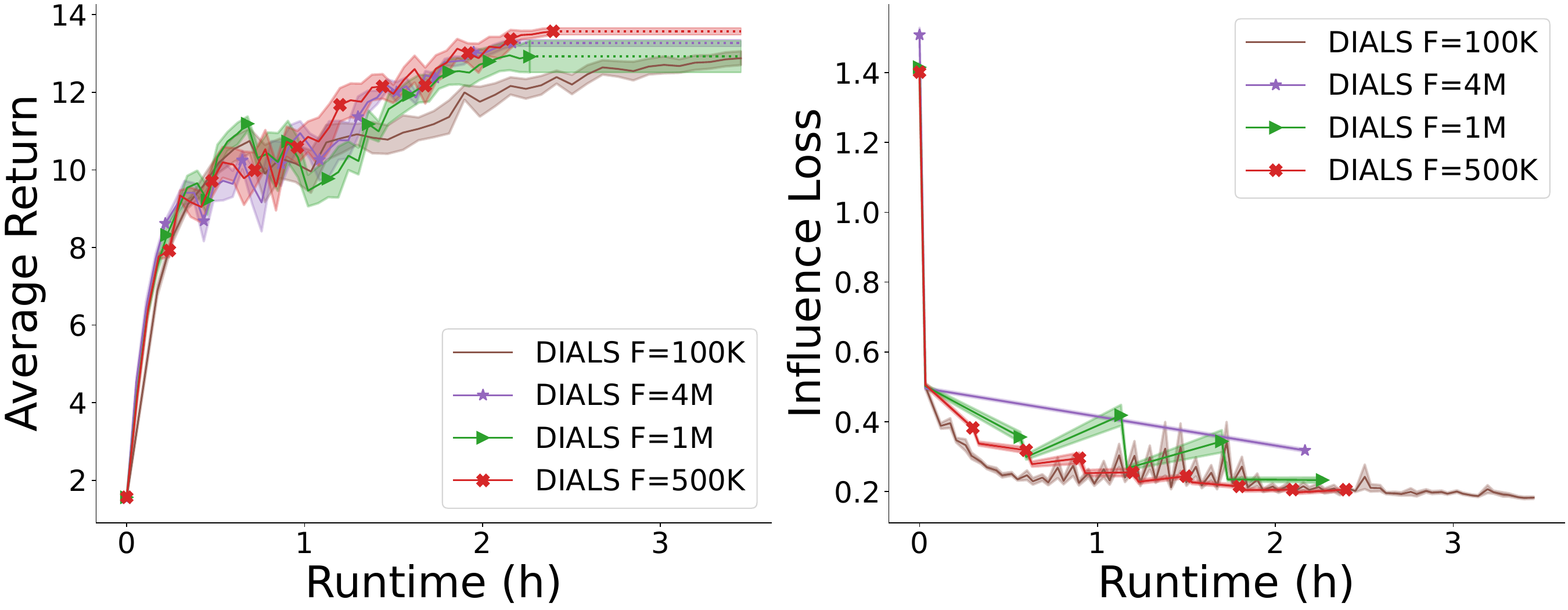}
         \caption{Warehouse $4$ agents}
         \label{fig:DIALS_warehouse_4agents}
     \end{subfigure}
     \hfill
     \begin{subfigure}{0.7\textwidth}
         \centering
         \includegraphics[width=\textwidth]{figures/DIALS_comparison_warehouse_5x5.pdf}
         \caption{Warehouse $25$ agents}
         \label{fig:DIALS_warehouse_25agents}
     \end{subfigure}
     \hfill
     \begin{subfigure}{0.7\textwidth}
         \centering
         \includegraphics[width=\textwidth]{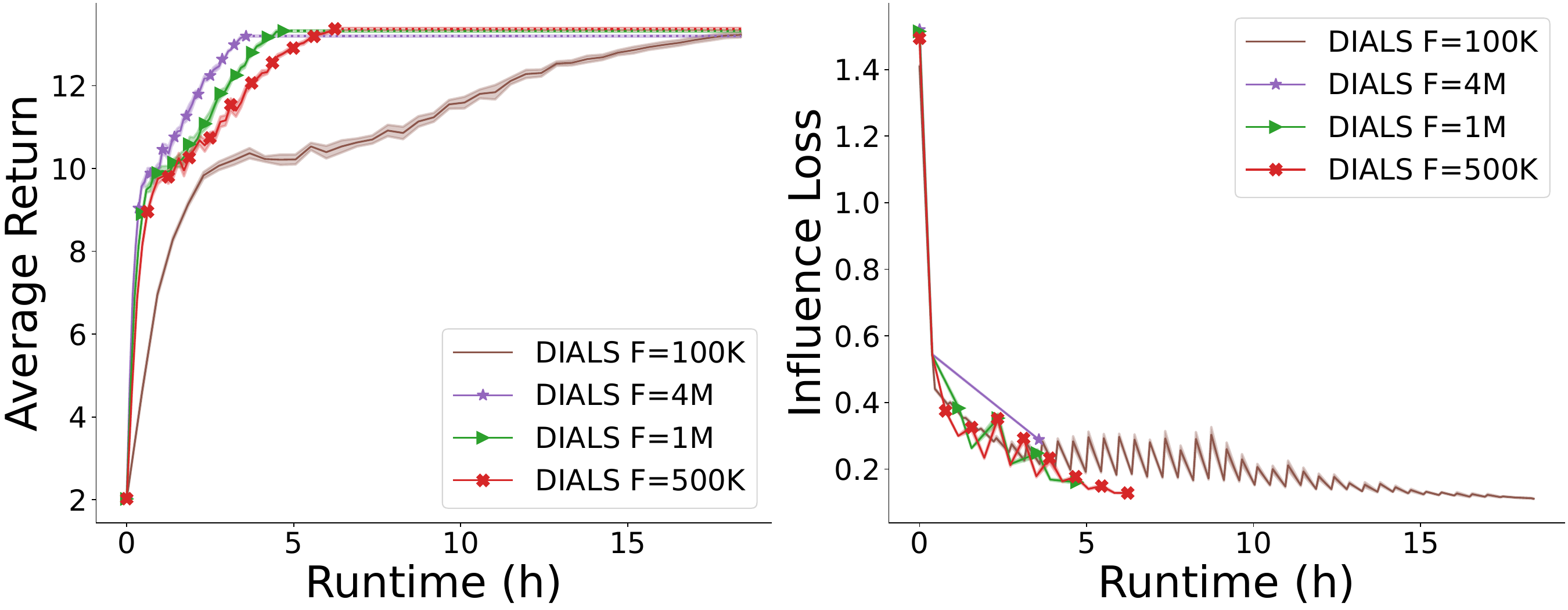}
         \caption{Warehouse $49$ agents}
         \label{fig:DIALS_warehouse_49agents}
     \end{subfigure}
     \caption{\textbf{Left (a), (b), (c), and (d):} Average return as a function of the number of timesteps with GS, DIALS $F=1$M, and untrained-DIALS on the warehouse environment. \textbf{Right (a), (b), (c), and (d):} Total runtime of training for 4M timesteps, $y$-axis is in $\log_2$ scale.}
     \label{fig:warehouse_DIALS_appendix}
\end{figure}
\newpage

\section{Implementation Details}

\subsection{Approximate Influence Predictors}
Due to the sequential nature of the problem, rather than feeding the full past history every time we make a prediction, we use a recurrent neural network (RNN) \citep{hochreiter1997long, Cho2014Learning} and  process observations one at a time,
\begin{equation}
\small
    P(u_t|l_t) \approx \hat{I}_\theta(u_t| \hat{h}_{t-1}, o_t) = F_{\text{rnn}}(\hat{h}_{t-1},o_t, u_t),
    \label{eq:RNN}
\end{equation}
where we use $\hat{h}$ to indicate that the history $h$ is embedded in the RNN's internal memory. 

Given that we generally have multiple influence sources $u_t = \langle u^1_t \dots u^M_t\rangle$, we need to fit $M$ separate models $\hat{I}_{\theta_m}$ to predict each of the $M$ influence sources. In practice, to reduce the computational cost, we can have a single network with a common representation module for all influence sources and output their probability distributions using $M$ separate heads. This representation assumes that the influence sources are independent of one another,
\begin{equation}
\small
    I(u_t|l_t) = \prod_{m=0}^M P(u^m_t|l_t),
\end{equation}
which is true for the two domains we study in this paper.

Finally, although according to the POMDP framework we should condition the AIPs on the full AOH, in many domains, one can exploit the structure of the transitions function to find a subset of variables in the AOH that is sufficient to predict the next observation. This subset is known as the d-separating set \citep{oliehoek2021sufficient}, and as shown in \citet{suau2022IAM} conditioning the AIPs on this rather than the full AOH can ease the task of approximating the influence distribution.

\subsection{Local regions}

When choosing the local regions to build the simulators, the only restriction in terms of size is that these should contain all the necessary information to compute local observations and rewards. In our experiments, we use one simulator per agent since, given that the simulators run in parallel, this is the most computationally efficient way of factorizing the environment. Yet, in certain applications, due to hardware limitations (e.g. not enough CPUs or memory available), it might be necessary to partition the environment into fewer local regions than the number of agents in the environment. 
Moreover, in some environments (including the two we explore here) better results may be obtained by grouping some of them together in the same simulator. In fact, one could potentially treat the agents in the same group/simulator as a single agent and train a policy to control all of them simultaneously. Note, however, that this is orthogonal to our work as we are mainly concerned with computational speedups. 




\section{Simulators}\label{ap:screenshots}
Figure \ref{fig:global_sim} shows two screenshots of the global simulator (GS) for the traffic (left) and warehouse (right) environments with 25 agents each. Figure \ref{fig:local_sim} shows two screenshots of the local simulator (LS) for the traffic (left) and warehouse environments (right). Since all local regions are the same (i.e. $\dot{T}_i$, $\dot{R}_i$, and $\dot{O}_i$ do not change) in the two environments, we use the same LS for all of them. However, because depending on where these are located they are influenced differently by the rest of the system, we train separate AIPs, $\{\hat{I}_{\theta_i}\}_{i \in N}$, for each of them. Note that, we chose the local regions to be the same for simplicity. However, the method can readily be applied to environments with different local transition dynamics $\dot{T}_i$, different local observations $\dot{O}_i$, and/or different local rewards $\dot{R}_i$ for every agent $i \in N$.
\begin{figure}[h!]
\centering
  \begin{subfigure}{0.45\textwidth}
  \hspace{8mm}
    \includegraphics[width=0.795\textwidth]{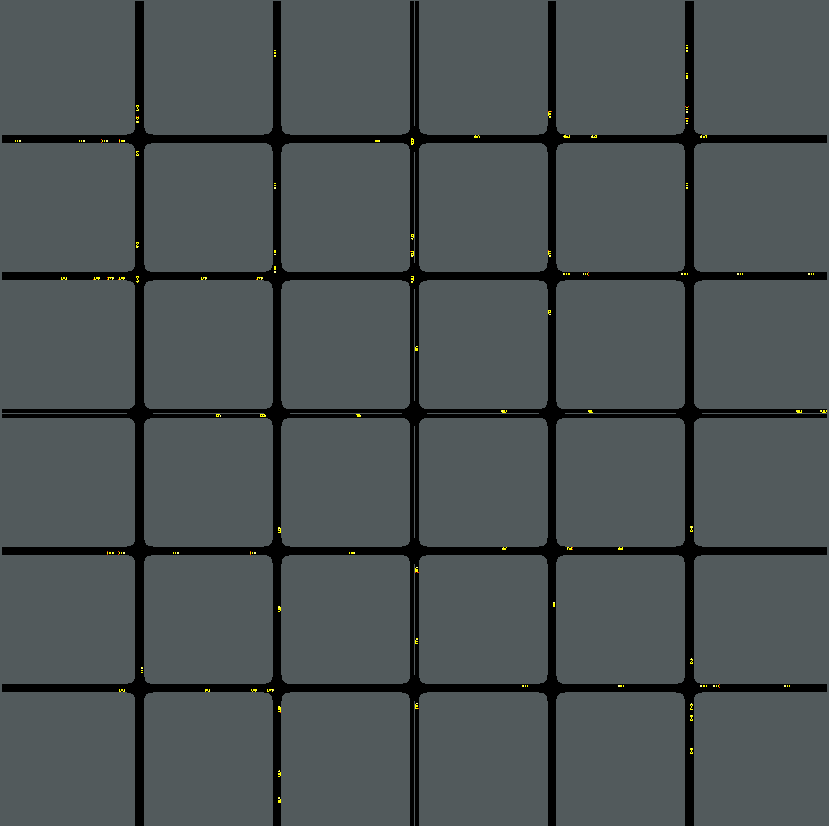}
  \end{subfigure}
  \hfill
  \begin{subfigure}{0.45\textwidth}
    \includegraphics[width=0.85\textwidth]{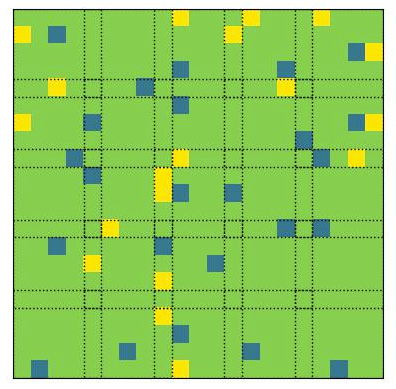}
  \end{subfigure}
\caption{A screenshot of the global simulators for the 25-agent variants of the traffic control (left) and warehouse (right) environments} 
\label{fig:global_sim}
\end{figure}

\begin{figure}[h!]
\centering
  \begin{subfigure}[b]{0.48\textwidth}
   \hspace{30mm}
    \includegraphics[width=0.49\textwidth]{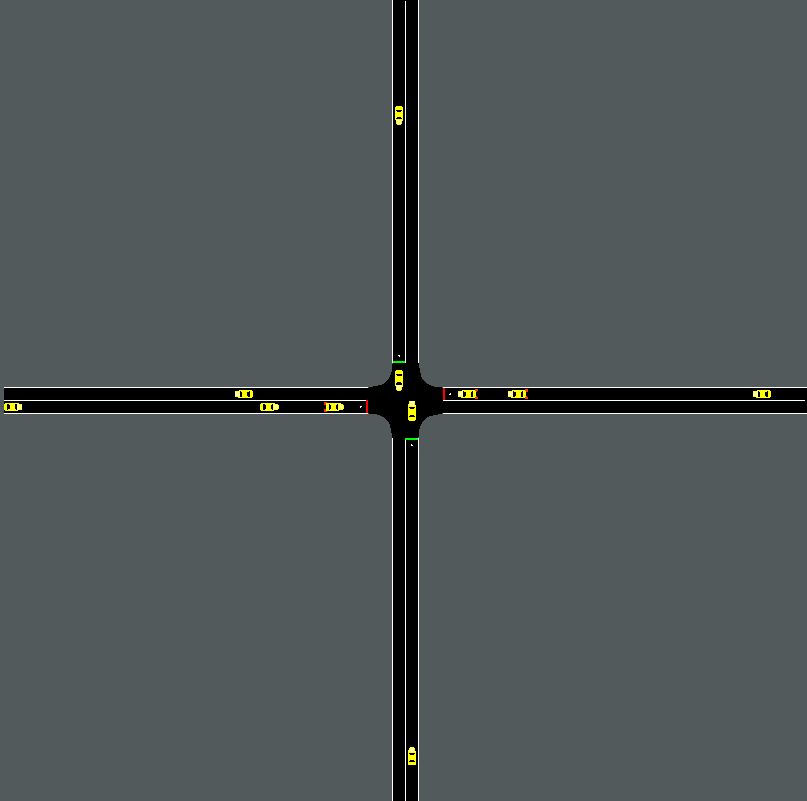}
    \vspace{2pt}
  \end{subfigure}
  \hfill
  \begin{subfigure}[b]{0.48\textwidth}
    \includegraphics[width=0.52\textwidth]{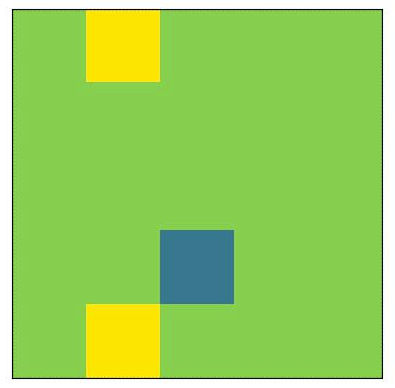}
  \end{subfigure}
\caption{A screenshot of the local simulators for the  traffic (left) and warehouse (right) environments.  Since all local regions are the same in the two environments, we use the same LS for all of them.} 
\label{fig:local_sim}
\vspace{-10pt}
\end{figure}
\section{Runtimes}\label{ap:runtimes}
The two tables below show a breakdown of the runtimes for the two environments and the three simulators. These were measured on a machine with 128 CPUs of the type AMD EPYC 7452 32-Core Processor. We used this machine for all our measurements because it is the only one in our computer cluster that can fit DIALS when training on the 100-agent variants of the environments. However, the experiments that required less than 100 CPUs were actually run on different machines.
\begin{table}[H]
\caption{Runtimes for the traffic control environment}
\resizebox{\textwidth}{!}{
\begin{tabular}{@{}lcccccccccccc@{}}
\toprule
                                      & \multicolumn{4}{c}{Agents training (h)}             & \multicolumn{4}{c}{\begin{tabular}[c]{@{}c@{}}Data collection +\\ influence training (h)\end{tabular}} & \multicolumn{4}{c}{Total (h)}  \\ \midrule
\multicolumn{1}{l|}{Number of agents} & 2    & 25    & 49     & \multicolumn{1}{c|}{100}    & 2                  & 25                 & 49                 & \multicolumn{1}{c|}{100}                & 2    & 25    & 49     & 100    \\ \midrule
\multicolumn{1}{l|}{GS}               & 7.24 & 46.96 & 105.41 & \multicolumn{1}{c|}{261.06} & -                  & -                  & -                  & \multicolumn{1}{c|}{-}                  & 7.24 & 46.96 & 105.41 & 261.06 \\
\multicolumn{1}{l|}{DIALS F=100K}     & 1.48 & 1.93  & 2.70   & \multicolumn{1}{c|}{3.70}   & 0.66               & 3.74               & 8.60               & \multicolumn{1}{c|}{22.38}              & 2.14 & 5.67  & 11.30  & 26.08  \\
\multicolumn{1}{l|}{DIALS F=500K}     & 1.48 & 1.93  & 2.70   & \multicolumn{1}{c|}{3.70}   & 0.13               & 0.75               & 1.72               & \multicolumn{1}{c|}{4.48}               & 1.61 & 2.68  & 4.42   & 8.18   \\
\multicolumn{1}{l|}{DIALS F=1M}       & 1.48 & 1.93  & 2.70   & \multicolumn{1}{c|}{3.70}   & 0.07               & 0.37               & 0.86               & \multicolumn{1}{c|}{2.24}               & 1.55 & 2.30  & 3.56   & 5.94   \\
\multicolumn{1}{l|}{DIALS F=4M}       & 1.48 & 1.93  & 2.70   & \multicolumn{1}{c|}{3.70}   & 0.02               & 0.09               & 0.21               & \multicolumn{1}{c|}{0.56}               & 1.50 & 2.02  & 2.91   & 4.26   \\
\multicolumn{1}{l|}{untrained-DIALS}  & 1.48 & 1.93  & 2.70   & \multicolumn{1}{c|}{3.70}   & -                  & -                  & -                  & \multicolumn{1}{c|}{-}                  & 1.48 & 1.93  & 2.70   & 3.70   \\ \bottomrule
\end{tabular}}
\end{table}

\begin{table}[H]
\caption{Runtimes for the warehouse environment}
\resizebox{\textwidth}{!}{
\begin{tabular}{@{}lcccccccccccc@{}}
\toprule
                                      & \multicolumn{4}{c}{Agents training (h)}              & \multicolumn{4}{c}{\begin{tabular}[c]{@{}c@{}}Data collection +\\ influence training (h)\end{tabular}} & \multicolumn{4}{c}{Total (h)}   \\ \midrule
\multicolumn{1}{l|}{Number of agents} & 2     & 25    & 49     & \multicolumn{1}{c|}{100}    & 2                  & 25                 & 49                 & \multicolumn{1}{c|}{100}                & 2     & 25    & 49     & 100    \\ \midrule
\multicolumn{1}{l|}{GS}               & 14.84 & 97.04 & 208.18 & \multicolumn{1}{c|}{468.46} & -                  & -                  & -                  & \multicolumn{1}{c|}{-}                  & 14.84 & 97.04 & 208.18 & 468.46 \\
\multicolumn{1}{l|}{DIALS F=100K}     & 2.13  & 2.56  & 3.19   & \multicolumn{1}{c|}{5.55}   & 1.32               & 7.11               & 15.19              & \multicolumn{1}{c|}{45.45}              & 4.45  & 9.67  & 18.38  & 51.00  \\
\multicolumn{1}{l|}{DIALS F=500K}     & 2.13  & 2.56  & 3.19   & \multicolumn{1}{c|}{5.55}   & 0.26               & 1.42               & 3.04               & \multicolumn{1}{c|}{9.09}               & 2.39  & 3.98  & 6.23   & 14.64  \\
\multicolumn{1}{l|}{DIALS F=1M}       & 2.13  & 2.56  & 3.19   & \multicolumn{1}{c|}{5.55}   & 0.13               & 0.71               & 1.52               & \multicolumn{1}{c|}{4.54}               & 2.26  & 3.27  & 4.71   & 10.09  \\
\multicolumn{1}{l|}{DIALS F=4M}       & 2.13  & 2.56  & 3.19   & \multicolumn{1}{c|}{5.55}   & 0.03               & 0.18               & 0.38               & \multicolumn{1}{c|}{1.13}               & 2.16  & 2.74  & 3.57   & 6.68   \\
\multicolumn{1}{l|}{untrained-DIALS}  & 2.13  & 2.56  & 3.19   & \multicolumn{1}{c|}{5.55}   & -                  & -                  & -                  & \multicolumn{1}{c|}{-}                  & 2.13  & 2.56  & 3.19   & 5.55   \\ \bottomrule
\end{tabular}}
\end{table}

\section{Memory Usage}\label{ap:memory}
The table below shows the peak memory usage of the GS and the DIALS. For the latter we provide the memory usage per process and in total. The memory needed for the GS seems to grow logarithmically with the number of agents, whereas for DIALS the memory usage per process stays relatively constant. However, the total amount of memory needed to run DIALS (aggregate of all processes) increases linearly with the number of agents and is considerably larger than that of the GS. 
\begin{table}[H]
\caption{Peak Memory Usage in Megabytes (MB)}
\resizebox{\textwidth}{!}{
\begin{tabular}{@{}llcccccccc@{}}
\toprule
\multicolumn{2}{l}{Environment}                                               & \multicolumn{4}{c}{Traffic}                             & \multicolumn{4}{c}{Warehouse}     \\ \midrule
\multicolumn{2}{l|}{Number of agents}                                          & 4     & 25     & 49      & \multicolumn{1}{c|}{100}     & 4     & 25     & 49     & 100     \\ \midrule
\multicolumn{2}{l|}{GS}                                                        & 375.3 & 392.7  & 412.5   & \multicolumn{1}{c|}{457.4}   & 339.3 & 391.8  & 469.6  & 607.4   \\ \midrule
\multicolumn{1}{l|}{\multirow{2}{*}{DIALS}} & \multicolumn{1}{l|}{Per process} & 219.5 & 221.0  & 225.8   & \multicolumn{1}{c|}{228.7}   & 195.6 & 201.9  & 203.7  & 207.5   \\
\multicolumn{1}{l|}{}                       & \multicolumn{1}{l|}{Total}       & 878.0 & 5525.0 & 11064.2 & \multicolumn{1}{c|}{22870.0} & 782.4 & 5047.5 & 9981.3 & 20750.0 \\ \bottomrule
\end{tabular}}
\label{tab:memory}
\end{table}
\section{Hyperparameters}\label{ap:hyper}

The hyperparameters used for the AIPs are reported in Table \ref{tab:AIPs}. Since feeding past local states did not seem to improve the performance of the AIPs in the traffic environment we modeled them with FNNs. In contrast, adding the past ALSHs does decrease the CE loss in the warehouse environment, and thus we used GRUs \citep{Cho2014Learning} instead. The size of the networks was chosen as a compromise between low CE loss and computational efficiency. On the one hand, we need accurate AIPs to properly capture the influence distributions. On the other, we also want them to be small enough such that we can make fast predictions. The hyperparameter named seq. length determines the number
of timesteps the GRU is backpropagated. This was chosen to be equal to the horizon such that episodes did not have to be truncated. The rest of the hyperparameters in Table \ref{tab:AIPs}, which refer to the training setup for the AIPs, were manually tuned.
\begin{table}[H]
\centering
\caption{Hyperparameters for approximate influence predictors (AIPs).}
\resizebox{\textwidth}{!}{
\begin{tabular}{@{}lllllllll@{}}
\toprule
          & Architecture & Num. layers & Num. neurons    & Seq. length & Learning rate      & Dataset size     & Batch size & Num. epochs \\ \midrule
Traffic   & FNN          & $2$         & $128$ and $128$ & -               & $1\mathrm{e}{-4}$  & $1\mathrm{e}{4}$ & $128$      & $100$       \\ \midrule
Warehouse & GRU          & $2$         & $64$ and $64$   & $100$           & $1\mathrm{e}{-4}$ & $1\mathrm{e}{4}$ & $32$       & $300$       \\ \bottomrule
\end{tabular}}
\label{tab:AIPs}
\end{table}

The hyperparmeters used for the policy networks are given in Table \ref{tab:PolicyNetworks}. We chose again GRUs for the warehouse environment and FNNs for the traffic domain, since feeding the previous AOHs did not seem to improve the agents' performance in the latter. The network size and the sequence length parameter for the GRUs were manually tuned on the smallest scenarios with 4 agents.
\begin{table}[H]
\caption{Hyperparmeters for policy networks.}
\label{tab:PolicyNetworks}
\centering
\begin{tabular}{@{}lllll@{}}
\toprule
          & Architecture & Num. layers & Num. neurons & Seq. length \\ \midrule
Traffic   & FNN          & 2           & 256 and 128    & -               \\ \midrule
Warehouse & GRU          & 2           & 256 and 128    & 8               \\ \bottomrule
\end{tabular}
\end{table}

As for the hyperparameters specific to PPO (Table \ref{tab:PPO_hyperparameters}), we used the same values reported by \citep{schulman2017proximal}, and only tuned the parameter $T$, which depends on the rewards and the episode length. $T$ determines for how many timesteps the value function is rollout before computing the value estimates.


\begin{center}
\vspace{-10pt}
  \begin{table}[h]
  \centering
  \caption{PPO hyperparameters.}
  \vspace{5pt}
\resizebox{0.5\textwidth}{!}{
  \begin{tabular}{ p{2.5cm}|p{4cm}}
 Rollout steps $T$ & 16 traffic and 8 warehouse \\
 Learning rate & 2.5e-4 \\
 Discount $\gamma$ & 0.99 \\
 GAE $\lambda$ & 0.95 \\
 Memory size & 128 \\
 Batch size & 32 \\
 Num. epoch & 3 \\
 Entropy $\beta$ & 1.0e-2 \\
 Clip $\epsilon$ & 0.1 \\
 Value coeff. $c_1$ & 1 \\
\end{tabular}
}
\vspace{-20pt}
\label{tab:PPO_hyperparameters}
\end{table}
\end{center}

\end{document}